\documentclass{article}
\usepackage{iclr2027_conference,times}
\usepackage{amsmath,amssymb,amsthm,mathtools}
\usepackage{graphicx}
\usepackage{booktabs}
\usepackage{subcaption}
\usepackage{natbib}
\usepackage{xcolor}
\usepackage[colorlinks=true,linkcolor=black,citecolor=blue,urlcolor=blue]{hyperref}
\usepackage{url}
\usepackage[most]{tcolorbox}
\usepackage{enumitem}
\usepackage[T1]{fontenc}

\newtheorem{theorem}{Theorem}
\newtheorem{proposition}[theorem]{Proposition}

\theoremstyle{definition}

\theoremstyle{remark}
\newtheorem{remark}{Remark}

\newcommand{\bS}{\beta}                       
\newcommand{\WS}{W_{S}}                       
\newcommand{\WT}{\widehat{W}_{T}}             
\newcommand{\WR}{W_{R}}                       
\newcommand{\What}{\widehat{W}}               
\newcommand{\ncls}{n}                         
\newcommand{\ndim}{m}                         
\newcommand{\Ltot}{L}                         
\newcommand{\Lr}{L_{r}}                       
\newcommand{\Lt}{L_{t}}                       
\newcommand{\gpar}{g_{\parallel}}

\newcommand{\Ppar}{P_{\parallel}}
\newcommand{\Pperp}{P_{\perp}}
\newcommand{\Sig}{\Sigma}
\newcommand{\Hess}{H}
\newcommand{\Tr}{\operatorname{Tr}}
\newcommand{\E}{\mathbb{E}}
\newcommand{\Real}{\mathbb{R}}
\newcommand{\lr}{\eta}                        
\newcommand{\bs}{B}                           
\newcommand{\data}{D}                         
\newcommand{\xc}{z}                           
\newcommand{\xcopt}{\xc^{\ast}}
\newcommand{\lropt}{\lr^{\ast}}
\newcommand{\ar}{\alpha_{r}}                  
\newcommand{\at}{\alpha_{t}}                  
\newcommand{\aD}{\alpha_{D}}                  
\newcommand{\expg}{\gamma}                    
\newcommand{\exps}{\sigma}                    
\newcommand{\expp}{p}                         
\newcommand{\expq}{q}                         
\newcommand{\pwr}{a}                          
\iclrfinalcopy

\title{Optimizer-dependent training dynamics converge to the same one-third optimal data scaling}

\author{%
Hyunseok Lee$^{1}$, \;
Mihir Basil$^{2}$\thanks{Work done during an internship at MIT.}, \;
Yizhou Liu$^{1,\dagger}$, \;
Jeff Gore$^{1,\dagger}$ \\
$^{1}$Department of Physics, Massachusetts Institute of Technology, Cambridge, MA \\
$^{2}$Department of Physics, University of Pennsylvania, Philadelphia, PA \\
$^{\dagger}$Co-corresponding authors:
\texttt{liuyz@mit.edu}, \texttt{gore@mit.edu}
}

\begin{document}
\maketitle
\lhead{Preprint}

\begin{abstract}
Neural scaling, in which loss falls as a power law with training, is central to large language models, and one recent proposal is that a $1/3$ exponent emerges from learning peaked distributions. That account describes SGD, but models in practice are trained with adaptive optimizers. Here we separate two exponents the $1/3$ account does not distinguish: how fast the loss falls with training steps along a single run, and how fast the optimally tuned loss falls with dataset size $D$. We show that the first, a dynamic exponent, is optimizer-specific while the second, an optimal data exponent, converges to $1/3$ across optimizers. In an online teacher–student model we decompose the loss into norm growth (radial) and alignment toward the teacher direction (tangential), each decaying as a power law with dynamic exponents $\alpha_{r}$ and $\alpha_{t}$. Under SGD, both are close to $1/3$, so the data exponent is also $1/3$ across different learning rates. Under Adam the two separate: $\alpha_{r} \simeq 0.48$ but $\alpha_{t} \simeq 0.08$. Since the total loss is minimized when these two parts are balanced, the optimal learning rate is optimizer-dependent: $D$-independent for SGD but falls with $D$ for Adam. Yet tuned to that optimum, the loss returns to $D^{-1/3}$ for both. A stochastic-dynamics analysis explains why: the optimizers can trade decay speed between the two channels, but they all fall on a single dynamic exponent relation, $2\alpha_{r}+ \alpha_{t} = 1$, which fixes the optimal data exponent at $1/3$. Across seven optimizers, including Muon, the measured exponents are consistent with this relation, and the optimal-loss envelopes agree with $D^{-1/3}$ across them. The optimizer sets how fast a model learns per step; tuned optimally, it changes the prefactor but not the rate at which loss falls per sample.
\end{abstract}

\section{Introduction}
\label{sec:intro}

The training of today's large language models (LLMs) is driven by neural scaling
laws \citep{hestness2017deep,kaplan2020scaling,hoffmann2022chinchilla}: test
loss falls as a power law in training dataset size and in model size, so
training longer with more parameters keeps improving performance. These laws now
guide how pre-training budgets are set. Their origin, however, is still debated,
and without an explanation we cannot confidently predict future scaling or say
how to improve it.

We focus on the scaling with dataset size. Recent proposals show that the exponent emerges from non-linearity without power law present in the data
\citep{liu2026onethird,kuhn2026boundary}. When a softmax must learn a peaked
distribution, as it does in LLMs, the loss then falls as a power law in training
time with exponent $1/3$. Under online, single-epoch pre-training the number of
steps is proportional to the dataset size, so this reads as a $1/3$ data
scaling, close to the exponents measured from open-source models and from
Chinchilla \citep{hoffmann2022chinchilla}. The prior works
\citep{liu2026onethird,kuhn2026boundary}, however, treat gradient flow or plain
SGD, while LLMs are trained with adaptive optimizers \citep{kingma2015adam}.

Which optimizer is used is not just a detail of implementation. An optimizer rescales the gradient at every step and changes the noise the
update carries, so how fast the loss falls under the emergent neural scaling could
depend on the choice. The same may hold for where the optimum sits: Adam remains standard for pre-training, alternatives such
as Muon \citep{jordan2024muon} are now used in production runs, the two are reported to behave
differently, and the learning rate that works best scales differently with batch
size and with budget depending on which is chosen
\citep{bjorck2024scaling,li2025predictable,bergsma2025power,ren2026hypersphere}.
 We therefore ask:

\begin{tcolorbox}[colback=red!5, colframe=red!50, boxrule=0.5pt,
  left=6pt, right=6pt, top=4pt, bottom=4pt]
\textbf{Question:} Which parts of neural scaling are set by the optimizer, and which are not?
\end{tcolorbox}

We find that the training dynamics is optimizer-dependent while the optimal data
scaling is not. Where SGD follows a single $1/3$ power law in training time, no
single exponent describes the Adam loss curves. The reason is that the loss
carries two channels at once, a radial one set by softmax saturation and a
tangential one set by misalignment, and each decays with its own exponent, $\ar$
and $\at$. For SGD both come out close to $1/3$, the value the prior works
predict \citep{liu2026onethird,kuhn2026boundary}; for Adam neither does, with
$\ar$ near $1/2$ and $\at$ near zero. The learning rate that minimizes the loss
at a fixed sample budget is optimizer-dependent as well, scaling differently
with both batch size and budget. Yet across seven optimizers the two exponents
obey a sum rule, $2\ar + \at = 1$, which forces the tuned loss to
$\Ltot^{\ast}(\data) \sim \data^{-1/3}$ no matter where on the line an optimizer
sits. An optimizer therefore sets how fast a model learns per step, not how much
it can learn per sample.
\begin{tcolorbox}[colback=red!3, colframe=red!40!black, boxrule=0.6pt,
  title={Our contributions}, fonttitle=\normalsize,
  left=6pt, right=6pt, top=4pt, bottom=4pt]
\begin{itemize}[leftmargin=1.2em, itemsep=2pt, topsep=2pt, parsep=0pt]
  \item \textbf{Loss decomposition.} The loss splits into a radial part and a
  tangential part, each following its own power law in training time. The two
  exponents $\ar$ and $\at$ depend on the optimizer but not on the learning rate
  or batch size.
\item \textbf{One-third data optimum.} How the optimal hyperparameters scale
  depends on the optimizer, yet the loss at that optimum scales as
  $\data^{-1/3}$ for every optimizer we test.
  \item \textbf{Dynamic Exponent Relation.} The two exponents fall on the sum rule
  $2\ar + \at = 1$, and this is what produces the one-third optimal data
  scaling.
\end{itemize}
\end{tcolorbox}

In short, SGD and Adam produce different training dynamics on the same task yet
reach the same one-third data scaling once the learning rate is tuned.
Section~\ref{sec:model} sets up the toy model. Section~\ref{sec:results} reports
the experiments together with the theoretical results that account for them,
with the full derivations left to the appendices. Related work follows in
Section~\ref{sec:related}, and Section~\ref{sec:discussion} closes with what the
findings imply.

\section{Toy model}
\label{sec:model}

We want the minimal toy model that captures the power-law scaling behavior of
LLMs. Following \citet{liu2026onethird} and \citet{kuhn2026boundary}, we use a
single-layer network with a softmax output and a cross-entropy loss, mimicking
the language modeling (LM) head.

We use a teacher--student setup in which both networks share the same
architecture and the student is trained to match the teacher's output. The
teacher is a fixed matrix $\WT \in \Real^{\ncls \times \ndim}$ and the student a
trainable $\WS \in \Real^{\ncls \times \ndim}$, each mapping an
$\ndim$-dimensional input to $\ncls$ logits. Inputs $x \in \Real^{\ndim}$, which
play the role of hidden states, are drawn i.i.d.\ standard normal and RMS-normalized, as hidden states are before the output head of a language model, and the teacher labels
each one by the one-hot encoding of $\mathrm{Softmax}(\WT x) \in \Real^{\ncls}$. Because a one-hot label records only
which logit is largest, the teacher's scale does not matter and we take $\WT$ to
have unit norm. The label is therefore maximally peaked: it is the
infinite-inverse-temperature limit of the teacher in \citet{liu2026onethird} and
the hard-label construction of \citet{kuhn2026boundary}, and it removes the
teacher inverse temperature as a hyperparameter that would otherwise have to be
swept.

The student maps the same input to $q(x) = \mathrm{Softmax}(\WS x)$, and we
train it online on the cross-entropy loss
$\Ltot = \langle \mathrm{CE}(p(x), q(x)) \rangle_{x}$, drawing a fresh batch at
every step. Two things then matter. The first is the weight norm
$\bS = \lVert \WS \rVert_{F}$. Because the inputs are isotropic with fixed norm, $\bS$ sets
the scale of the student's logits and therefore acts as an inverse temperature:
the larger $\bS$, the sharper the student's distribution. Since the teacher's
label is one-hot, the student can go on reducing its loss only by growing $\bS$.
The second is the optimizer. We compare SGD, which follows the mini-batch
gradient directly, with Adam, which rescales each coordinate by its accumulated
second moment. That rescaling changes both the drift and the noise in the
update, and its net effect on the late-time dynamics is not something a single
update reveals---so we treat it as something to measure rather than to assume.

\begin{figure}[t]
\centering
\includegraphics[width=\linewidth]{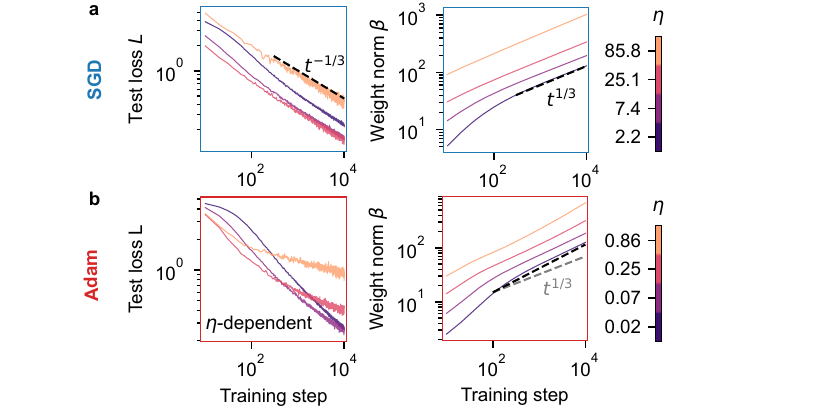}
\caption{\textbf{Optimizers produce qualitatively different dynamics on the same
task.} Online teacher--student softmax classification with $\ncls = 128$ classes
and input dimension $\ndim = 32$, trained with SGD (top row, batch size
$\bs = 256$) and Adam (bottom row, batch size $\bs = 64$). For each optimizer
$\lr$ and $\bs$ are chosen so that the tangential loss stays positive and the
norm growth remains drift-dominated (Appendix~\ref{app:validity}); the exponents
do not depend on batch size for either optimizer (Figure~\ref{fig:fs2b}), so the
comparison is not affected by the choice. Color denotes learning rate $\lr$; the two rows use different ranges,
since the useful learning rates differ by roughly two decades between the
optimizers. \textbf{(a)} Training dynamics of SGD
follows $1/3$ time scaling. Left: test loss $\Ltot$ follows a single clean power
law across the full range, with fitted exponent $0.34 \pm 0.01$. Right: weight
norm $\bS = \lVert \WS \rVert_{F}$ is consistent with $\bS \sim t^{1/3}$ (fitted
$0.33 \pm 0.01$). \textbf{(b)} Training dynamics of Adam deviates from $1/3$
time scaling. Left: test loss curves bend over an extended transient and no
single exponent describes the trajectory. Right: weight norm expands faster than
$t^{1/3}$ throughout (fitted $0.44 \pm 0.04$).}
\label{fig:fig1}
\end{figure}

We next explore how the toy model trains under two different optimizers, SGD and
Adam. We fix $\ncls = 128$ and $\ndim = 32$, use a batch size of $256$ for SGD
and $64$ for Adam (chosen so that the tangential loss stays positive and the norm
growth remains drift-dominated; the exponents do not depend on batch size for either
optimizer, Figure~\ref{fig:fs2b}), and show four learning rates spanning each optimizer's useful range,
$2.2$ to $85.8$ for SGD and $0.02$ to $0.86$ for Adam---the two optimizers
require learning rates some two decades apart on the same task
(Appendix~\ref{app:experiments}). For each run we
read off its time-scaling exponent, by which we mean the exponent of the power
law that the loss follows against training step at a fixed learning rate and
batch size---the quantity \citet{liu2026onethird} predict to be $1/3$. Under SGD
both the loss and the weight norm follow power laws whose exponents converge to
$1/3$ across the whole range of $\lr$, reproducing that prediction
(Figure~\ref{fig:fig1}a). Under Adam both quantities depart from it: the loss
decays far more slowly once $\lr$ grows past its optimum near $0.07$, so no
single exponent describes the family of curves, and the weight norm, while still
a power law, grows faster than the $1/3$ that SGD shows
(Figure~\ref{fig:fig1}b). The time scaling is therefore optimizer-dependent, and
Adam does not follow the exponent that SGD does. What this does not settle is how
much a given number of samples buys once the learning rate is tuned, which is a
separate quantity with its own exponent; keeping the two apart is what the rest
of the paper is about.

\begin{tcolorbox}[colback=red!3, colframe=red!40!black, boxrule=0.6pt,
  title={Observation: Adam has no single exponent}, fonttitle=\normalsize,
  left=6pt, right=6pt, top=4pt, bottom=4pt]
  Under SGD the loss follows a single power law in training time; under Adam no single exponent describes it.
\end{tcolorbox}

\section{Results}
\label{sec:results}

\begin{figure}[t]
\centering
\includegraphics[width=\linewidth]{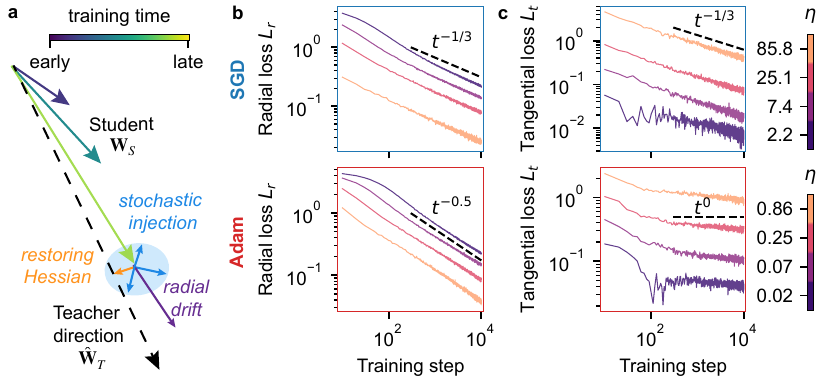}
\caption{\textbf{The loss splits into a radial and a tangential component with
distinct power-law dynamics and trade-offs between them.} \textbf{(a)} Geometry
of the decomposition. The
student $\WS$ is compared to the rescaled teacher $\WR = \bS \WT$, which carries
the student's current norm but the teacher's direction. The radial loss
$\Lr = \mathrm{CE}(p, p_{R})$ measures scale mismatch and the tangential loss
$\Lt = \Ltot - \Lr$ measures misalignment, which the restoring Hessian reduces
and stochastic injection from mini-batch noise sustains. \textbf{(b, c)} $\Lr$
and $\Lt$ against training step, for SGD (top) and Adam (bottom), colored by
learning rate over the same ranges as in Figure~\ref{fig:fig1}. Both decay as
power laws whose exponents are independent of the learning rate:
$\ar = 0.35 \pm 0.01$ and $\at = 0.33 \pm 0.02$ for SGD, $\ar = 0.48 \pm 0.03$ and $\at = 0.08 \pm 0.01$ for
Adam.}
\label{fig:fig2}
\end{figure}

In Adam, we observe loss dynamics that depend on the learning rate $\lr$. If the student were perfectly
aligned with the teacher direction, as assumed by the aligned student ansatz of
\citet{liu2026onethird}, the loss would track the growth of the weight norm, and its time exponent
would therefore be independent of $\lr$. We conclude that misalignment can
contribute non-negligibly to the loss. Indeed, while the restoring Hessian pulls
the student toward the teacher, stochastic injection from mini-batch sampling
noise kicks it away from perfect alignment (Figure~\ref{fig:fig2}a). This
motivates a loss decomposition built on an auxiliary rescaled teacher
$\WR = \bS \WT$, the teacher rescaled to the student's norm: we split the loss
into a radial part $\Lr = \mathrm{CE}(p, p_{R})$, with
$p_{R} = \mathrm{Softmax}(\WR x)$, and a tangential part $\Lt = \Ltot - \Lr$, so
that the radial loss measures the contribution of scale mismatch, that is an
unsaturated softmax, and the tangential loss measures that of misalignment. With
this decomposition we expect $\Lr$ to follow the time scaling of the weight norm
$\bS$, while $\Lt$ may have a time dependence of its own.

We first test the radial channel, where the decomposition predicts a loss set by
the weight norm. We track $\Lr$ against training step for both optimizers,
across the same range of learning rates as in Figure~\ref{fig:fig1}. Both SGD
and Adam show a clear power-law decay of $\Lr$ that is independent of the
learning rate (Figure~\ref{fig:fig2}b). For SGD the time-scaling exponent $\ar$
is close to $1/3$ ($0.35 \pm 0.01$), while for Adam it is faster, close to $1/2$
($0.48 \pm 0.03$). Both exponents are consistent with $\Lr \sim 1/\bS$, and plotting $\Lr$
directly against $\bS$ collapses the curves onto that law
(Figure~\ref{fig:fs2a}).
The radial channel therefore obeys the same law under both optimizers, the
$\Lr \sim 1/\bS$ predicted by \citet{liu2026onethird} and
\citet{kuhn2026boundary}; what differs between SGD and Adam is only how fast the
norm grows, and the two radial exponents follow from that.

We next turn to the tangential channel, the part of the loss that the aligned
student ansatz of \citet{liu2026onethird} assumes away. Measured the same way,
both SGD and Adam again give straight lines in a log-log plot, independent of
the learning rate (Figure~\ref{fig:fig2}c). For SGD the time-scaling exponent
$\at$ is again close to $1/3$ ($0.33 \pm 0.02$), but for Adam it is much slower, close
to zero ($0.08 \pm 0.01$). Each channel is therefore a clean power law with a
learning-rate-independent, optimizer-specific exponent, which explains what we
saw in Figure~\ref{fig:fig1}: the total loss is the sum of the two, and while
for SGD the exponents nearly coincide so the sum is again a single power law,
for Adam they are far apart, so the sum crosses over between them at a point
that depends on the learning rate.

\begin{tcolorbox}[colback=red!3, colframe=red!40!black, boxrule=0.6pt,
  title={Concept: Two channels}, fonttitle=\normalsize,
  left=6pt, right=6pt, top=4pt, bottom=4pt]
  The loss splits into a radial part set by the weight norm and a tangential part set by misalignment with the teacher. Each decays as its own power law, with optimizer-dependent yet hyperparameter-independent exponents $\ar$ and $\at$.
\end{tcolorbox}

Adam trades radial acceleration ($\ar > 1/3$) against tangential slowdown
($\at < 1/3$), which is what makes the total loss decay faster than $1/3$ early
and slow down later. And the experiment shows yet another trade-off, in the
coefficients rather than the exponents: in both SGD and Adam, raising the
learning rate lowers the radial loss $\Lr$ (Figure~\ref{fig:fig2}b) at the expense of a larger tangential loss 
$\Lt$ (Figure~\ref{fig:fig2}c), and batch size trades the same two quantities in the
opposite direction at a fixed sample budget (Figure~\ref{fig:fs2b}). The coefficient trade-off suggests the
balance between the two components may set the optimal hyperparameters. The
exponent trade-off adds that this balance must shift as training proceeds, since
the number of samples seen grows linearly with training time. We therefore
expect the optimal hyperparameters to be independent of the data size for SGD,
where the two time-exponents coincide, but to depend on it for Adam, where they
do not.

\begin{figure}[t]
\centering
\includegraphics[width=\linewidth]{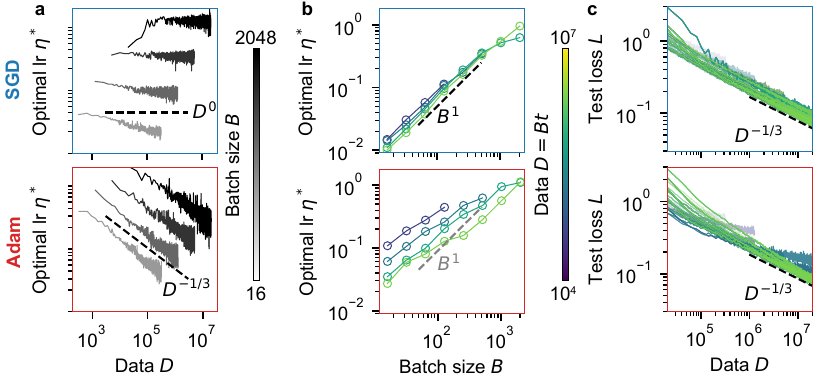}
\caption{\textbf{Optimal learning rate scales differently with $\bs$ and $\data$
for SGD and Adam, but the optimal loss follows one-third data scaling for both.}
Learning-rate sweeps over $8$ batch sizes $\bs \in [16, 2048]$ and $16$ learning
rates, spanning $[10^{-3}, 10]$ for Adam and $[10^{-1}, 10^{3}]$ for SGD, with
$\data = \bs t$ the number of online samples processed. In each panel the upper
row is SGD and the lower row Adam. Slopes $s_{B}$ and $s_{D}$ quoted in (a) and
(b) are from the joint log-log plane fit
$\log_{10} \lropt = s_{D}\log_{10} \data + s_{B}\log_{10} \bs + c$ described in
Appendix~\ref{app:experiments}. \textbf{(a)} $\lropt$ against $\data$, one curve per
batch size (grayscale). SGD's optimum is nearly flat ($s_{D} = -0.04 \pm 0.01$), while Adam's
decreases ($s_{D} = -0.35 \pm 0.01$), close to $-1/3$ and to the recently reported
``magic exponent'' $0.32$ \citep{bjorck2024scaling}. \textbf{(b)} $\lropt$
against batch size, one curve per data size (color). SGD follows
$\lropt \propto \bs$ ($s_{B} = 1.01 \pm 0.01$), while Adam gives $s_{B} = 0.76 \pm 0.01$,
steeper than the square-root rule of common practice and than the value reported
for LLM training \citep{ren2026hypersphere}. \textbf{(c)} Test loss along the
optimal-$\lr$ envelope against $\data$, with individual runs colored by training
step and shaded by batch size. Both optimizers give the same optimal data scaling, $\Ltot^{\ast}(\data) \sim \data^{-1/3}$, with fitted exponents $\alpha_D = 0.341 \pm 0.001$ for SGD and $0.341 \pm 0.003$
for Adam. The quoted exponent is the pooled estimate across batch sizes of Appendix~\ref{app:experiments}.4. The $\lropt$ traces in (a) are visibly noisy at large $\data$ while the
envelope in (c) is clean; Figure~\ref{fig:fs3a} shows the underlying sweeps and the
flatness of the basin.}
\label{fig:fig3}
\end{figure}

We test this expectation by locating the optimal learning rate directly. We
sweep two parameters: $16$ learning rates, spanning $10^{-3}$ to $10$ for Adam
and $10^{-1}$ to $10^{3}$ for SGD, and $8$ batch sizes from $16$ to $2048$. For
each batch size and each step we take the minimum test loss over learning rates
and record the learning rate attaining it, then fit these locations in
$(\bs, \data)$ space to a log-log plane,
$\log_{10} \lropt = s_{D}\log_{10} \data + s_{B}\log_{10} \bs + c$ (Appendix~\ref{app:experiments}). The optimal
learning rate turns out to be nearly data-independent for SGD ($s_{D} = -0.04 \pm 0.01$),
while for Adam it decreases with $\data$ as $\lropt \sim \data^{-1/3}$
($s_{D} = -0.35 \pm 0.01$, Figure~\ref{fig:fig3}a). The latter is close to the ``magic
exponent'' recently reported for Adam training of LLMs
\citep{bjorck2024scaling}, a connection we return to in
Section~\ref{sec:discussion}. Figure~\ref{fig:fig3}a therefore confirms what the
decomposed loss curves led us to expect: where the two time-exponents coincide,
as for SGD, the optimum is data-independent. Optimizers like Adam can break that
coincidence, which leads to a data-dependent optimum.

We next ask how the optimum depends on batch size. Because the coefficient
trade-off between radial and tangential losses is present for both SGD and Adam,
we expect a nonzero $s_{B}$ in both cases. Indeed, the plane fit gives
$s_{B} = 1.01 \pm 0.01$ for SGD and $s_{B} = 0.76 \pm 0.01$ for Adam, steeper than the
$\lropt \sim \sqrt{\bs}$ of standard practice and than the $0.558$ recently
reported for LLM training \citep{ren2026hypersphere}
(Figure~\ref{fig:fig3}b). The curves taken at different $\data$ also collapse
for SGD while separating for Adam, mirroring Figure~\ref{fig:fig3}a. SGD and
Adam therefore differ in how the optimum moves with batch size as well as with
data size: every hyperparameter exponent we have measured separates the two
optimizers.

Now that we know $\lropt(\bs, \data)$, the natural next step is to ask what the
best achievable loss is at a given sample budget. We trace the optimal loss
envelope: for each $\data$, the lowest loss reached by any choice of
hyperparameters at that budget. This is a different object from a single
training curve, which is tuned for one budget and suboptimal at every other, and
it is the one that matters in practice. Given how differently the optimum
behaves under the two optimizers, we expect the envelopes
$\Ltot^{\ast}(\data)$ to differ as well. To our surprise, the envelope scales as
$\data^{-1/3}$ under both, with fitted exponents $\aD = 0.341 \pm 0.001$ for SGD and $0.341 \pm 0.003$
for Adam (Figure~\ref{fig:fig3}c). Tuning therefore washes out the differences we
measured in every individual exponent: the optimizer changes how training gets
there, but not what a given number of samples buys.

\begin{tcolorbox}[colback=red!3, colframe=red!40!black, boxrule=0.6pt,
  title={Result 1: One-third data optimum}, fonttitle=\normalsize,
  left=6pt, right=6pt, top=4pt, bottom=4pt]
Although the optimal hyperparameters scale differently with data size and batch size for SGD and Adam, the loss at
the optimum follows $\Ltot^{\ast}(\data) \sim \data^{-1/3}$ for both.
\end{tcolorbox}

We explain this convergence by modeling the learning dynamics as an
Ornstein--Uhlenbeck process. Although $\WS$ is high-dimensional, its dynamics
reduce to a radial and a tangential part, mirroring the loss decomposition, and
we solve for the steady state of each. The preconditioner enters the two
differently---in the radial channel it rescales the time derivative of the
weight norm, in the tangential channel the map from instantaneous loss to data
covariance---and that difference is what gives Adam two distinct time-exponents
where SGD, whose preconditioner is the identity, has one. For an optimizer whose
update $\sim -\lr g$ is scaled by $\bs^{\expp}\bS^{\expq}$, we obtain
\begin{equation}
\Lr \sim (\xc\data)^{-1/(3-\expq)}, \qquad
\Lt \sim \xc^{\,2/(3-\expq)}\,\data^{-(1-\expq)/(3-\expq)},
\qquad \textrm{or} \qquad L \sim \frac{k_r}{\bS} + \frac{k_t\bS^2}{\data}
\label{eq:channels}
\end{equation}
with the hyperparameters collapsing into the single coordinate
$\xc \equiv \lr\bs^{\expp-1}$ (Appendices~\ref{app:precond} and
\ref{app:tangential}).
Minimizing over $\xc$ gives $\xcopt \sim \data^{-\expq/3}$, at which both components scale as $\data^{-1/3}$ with the $q$-dependence canceled---the optimizer-independent one-third data scaling. The second form shows why: in terms of the norm, growing $\bS$ sharpens the softmax and cuts the radial cost, while reaching a larger $\bS$ within a fixed budget requires larger steps and injects more misalignment. Neither exponent in that balance carries $\expq$, which enters only through $k_t$ and through how the learning rate controls $\bS$.

\begin{figure}[t]
\centering
\includegraphics[width=\linewidth]{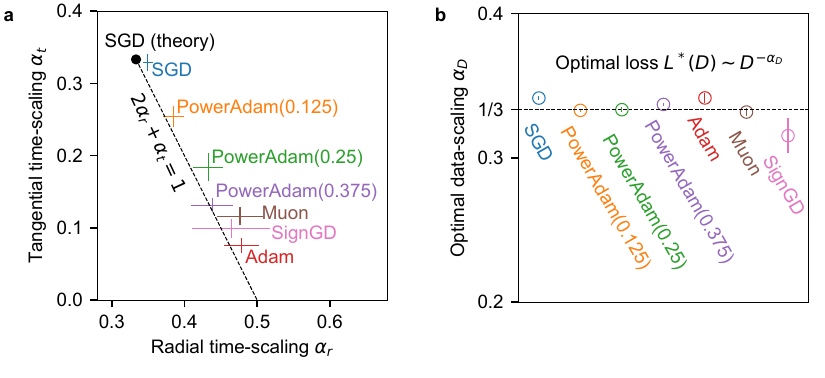}
\caption{\textbf{A sum rule for training exponents holds across optimizers, and
fixes the optimal data-scaling exponent at $1/3$.} \textbf{(a)} Measured radial
and tangential time-scaling exponents, $\Lr \sim t^{-\ar}$ and
$\Lt \sim t^{-\at}$, for SGD, the PowerAdam family $A = V^{-\pwr}$ at
$\pwr \in \{0.125, 0.25, 0.375, 0.5\ (\text{Adam})\}$, Muon, and SignGD. Crosses
give fit uncertainties. The dashed line is $2\ar + \at = 1$, a parameter-free
prediction with no fitted quantities; the black point marks the $\pwr = 0$
prediction $(1/3, 1/3)$. Adam sits at $\ar \simeq 0.48$,
above the nominal $2/5$, corresponding to an effective $\expq \simeq 0.9$; Appendix~\ref{app:validity} rules out two alternative
explanations. \textbf{(b)} Optimal data-scaling exponent $\aD$, with
$\Ltot^{\ast}(\data) \sim \data^{-\aD}$. 
Six of the seven lie within 2.5\% of $1/3$, slightly above it by the finite-norm excess (Appendix~\ref{app:experiments}.4); SignGD approaches it from below over this window, its larger bar reflecting a longer transient rather than a different exponent (Appendix~\ref{app:experiments}.4). This follows from (a): minimizing over $z$ gives $\alpha_D = (2\alpha_r + \alpha_t)/3$, which equals $1/3$ exactly on the line. Error bars are the spread across four lower fitting cutoffs (Appendix~\ref{app:experiments}).}
\label{fig:fig4}
\end{figure}

How do we know that this calculation is the right explanation for the $1/3$
scaling we measured? Usefully, it makes predictions beyond that scaling. Our
theory does not  derive an individual optimizer’s emergent $\expq$. Its parameter-free prediction is instead that, because
$\ar = 1/(3-\expq)$ and $\at = (1-\expq)/(3-\expq)$, the two
exponents satisfy $2\ar + \at = 1$ whatever $\expq$ turns out to be. To test this
sum rule we run five further optimizers: PowerAdam at three powers, whose
preconditioner scales as $V^{-\pwr}$ and for which Adam is the case
$\pwr = 1/2$; Muon; and SignGD \citep{bernstein2018signsgd}. Including SGD and Adam, the measured exponents of
all seven lie on the predicted line (Figure~\ref{fig:fig4}a). Muon and SignGD
are not members of the $V^{-\pwr}$ family---Muon's update is orthogonalized
rather than a diagonal rescaling---so their landing on the same line is evidence that the resulting scaling relations extend beyond the diagonal $V^{-a}$ family that motivates the calculation. Our calculation makes
two further predictions, neither displayed in the main figures and both free of parameters fitted to the quantities predicted: a relation between the dynamic exponent $\ar$ and the
hyperparameter exponent $s_{D}$, and a fixed ratio between the two channels at
the optimum, $\Lt/\Ltot = 1/3$. Both hold across all seven optimizers
(Appendix~\ref{app:envelope}). The evidence for the scaling description therefore does not rest on any single exponent: its dynamic, hyperparameter, envelope, and loss-ratio predictions are measured by separate procedures. The sum rule
therefore holds across optimizers whose individual exponents differ
substantially, which means the calculation captures the constraint linking the
two channels without needing to know what the preconditioner is.

Figure~\ref{fig:fig3}c established the $1/3$ optimum for two optimizers; the sum
rule implies it for every optimizer on the line, since
$\aD = (2\ar + \at)/3$ equals $1/3$ there. We repeat the envelope measurement for all seven. 
Although $\at$ varies by more than a factor of four across the family, the fitted $\alpha_D$ of six of them lie within $2.5\%$ of $1/3$, with a scatter across optimizers of $0.004$ (Figure~\ref{fig:fig4}b); the seventh, SignGD, approaches $1/3$ from below over the budgets we reach, and its per-batch-size exponents identify this as a longer transient rather than a different exponent (Appendix~\ref{app:experiments}.4).
Substituting the measured $\alpha_r$ and $\alpha_t$ into the same identity gives $0.34 \pm 0.01$ for SGD and $0.35 \pm 0.02$ for Adam, against fitted envelopes of $0.341$ for both; the two routes therefore agree within their uncertainties and both sit a few percent above $1/3$. Appendix~\ref{app:experiments} traces that common excess to the radial channel's approach to $L_r \propto \beta^{-1}$, which is measured directly and decreases with the weight norm.
The optimal exponent is therefore invariant while its two
components are not: an optimizer redistributes loss between the channels without
changing what optimally-tuned training buys per sample.

\begin{tcolorbox}[colback=red!3, colframe=red!40!black, boxrule=0.6pt,
  title={Result 2: Dynamic exponent relation}, fonttitle=\normalsize,
  left=6pt, right=6pt, top=4pt, bottom=4pt]
  The radial and tangential
time-exponents fall on a single relation, $2\ar + \at = 1$, across all seven optimizers, extending the one-third data optimum to all of them.
\end{tcolorbox}

\section{Related work}
\label{sec:related}

Why adaptive methods behave differently from stochastic gradient descent has been studied both through convergence analyses and through empirical comparisons of their performance \citep{duchi2011adaptive, kingma2015adam, reddi2019convergence, wilson2017marginal, zhang2020adaptive, kunstner2023noise, kunstner2024heavy}. Our setting has no finite optimum to approach, so the optimizer's influence appears instead in how fast the loss falls.

A phenomenological picture of pre-training describes the loss landscape as a river valley, in which training progresses slowly along a flat direction while fluctuating across sharp ones \citep{cohen2021edge, wen2025understanding, cohen2024understanding, liu2025neural}. Many recent optimizers connect to this picture through their designs, which estimate curvature, precondition across matrix structure, or normalize the size of updates \citep{gupta2018shampoo, chen2023symbolic, liu2024sophia, vyas2025soap, jordan2024muon, yuan2024mars, liu2025focus}. A flat direction of this kind is familiar from separable data, where the cross-entropy loss keeps falling as the weight norm grows without bound, and a large literature characterizes the direction that gradient methods converge to as the norm grows \citep{soudry2018implicit, ji2018risk, nacson2019stochastic, lyu2019gradient}, and how that direction depends on the choice of optimizer \citep{gunasekar2018characterizing, wang2021implicit, wang2022does, zhang2024implicit, tsilivis2026flavors, fan2026implicit}. Our model offers a minimal, analytically tractable instance of such a landscape due to nonlinearity, where learning a peaked distribution whose samples come arbitrarily close to the decision boundary gives rise to the river.

The loss of language models falls as a power law in data and model size when hyperparameters are tuned \citep{hestness2017deep, kaplan2020scaling, hoffmann2022chinchilla, besiroglu2024chinchilla}, and theoretical accounts trace these exponents to structure in the data or to the strong non-linearity \citep{sharma2022manifold, bahri2024explaining, maloney2022solvable, michaud2023quantization, bordelon2024dynamical, paquette20244+, bordelon2025feature,liu2026superposition,liu2026inverse, liu2026onethird,kuhn2026boundary}. Related literature studies how the optimal hyperparameters scale with batch size, model width, and training budget \citep{goyal2017accurate, mccandlish2018empirical, smith2018dont, shallue2019measuring, yang2021tuning, malladi2022sdes, bjorck2024scaling, li2025predictable, bergsma2025power, ren2026hypersphere}. We bring these together in a model where both the training dynamics and the optimal hyperparameters depend on the optimizer, while the data exponent at the optimum does not.

\section{Discussion}
\label{sec:discussion}

From a single-layer toy model, we find that the loss separates into a radial channel and
a tangential channel, each decaying as a power law whose exponent depends on the
optimizer but not on the learning rate or batch size. Across seven optimizers,
these exponents vary substantially, yet all fall on the single relation $2\ar + \at = 1$, which
fixes the tuned loss at $\data^{-1/3}$ for every one of them. The evidence is overdetermined: independently measured channel exponents, the relation between $\alpha_r$ and the budget scaling of the optimal learning rate, the optimal-loss envelope, and the crossing $L_t/L=1/3$ all agree with the same emergent scaling structure. Thus, how fast the loss decays, where the optimum sits, and what a sample budget buys are three faces of a single emergent scaling. An optimizer therefore changes how fast a
model learns per step, not how much it can learn per sample. We note that this concerns the
exponent, not the coefficient: optimizers differ in the prefactor of
$\Ltot^{\ast}(\data)$, which is what makes one preferable to another in
practice. The decomposition also yields a practical handle where the teacher direction is known: because the two channels sit in a fixed ratio at the optimum, measuring $\Lt/\Ltot$ in a single
run locates the optimal hyperparameters without a sweep
(Appendix~\ref{app:envelope}).

Our work has several limitations. The toy model is a single softmax layer,
leaving multi-layer and Transformer architectures untested. It also relies on
hard labels: with a finite-temperature teacher the student has a finite optimal
scale, so the power law would appear over an intermediate range, as
\citet{liu2026onethird} report, rather than continuing indefinitely. The two
emergent exponents separate as well, with Adam's batch scaling implying
$\expp \simeq 0.24$ while its radial exponent implies $\expq \simeq 0.9$, where
an idealized $A = V^{-\pwr}$ preconditioner would make them equal; every
prediction we test depends on $\expq$ alone, so this does not affect our
conclusions, but predicting either exponent from an optimizer's definition
remains open. Adam in particular is measured close to the edge of the regime the
analysis describes (Remark~\ref{rem:adam-edge}). Our results also describe a drift-dominated window at a
constant learning rate; trained far beyond that window, adaptive optimizers can
enter a qualitatively different regime. Finally, the decomposition itself needs
the teacher direction: $\Lr$, $\Lt$ and the tangent projector are all defined
relative to $\WT$, so for practical data without a ground-truth teacher, analysis will have additional conceptual complexity.
What does transfer is everything in Figures~\ref{fig:fig3}
and~\ref{fig:fig4}b---the scaling of $\lropt$ with batch size and with budget,
and the optimal exponent $\aD$---which is also everything for which an LLM
counterpart exists. Two of these sit close to their measured counterparts: the tuned data exponent $1/3$ lies inside the range $0.28$--$0.37$ of the Chinchilla scaling laws \citep{hoffmann2022chinchilla, besiroglu2024chinchilla}, and under Adam the optimal learning rate falls as $\data^{-0.35}$, near the $\data^{-0.32}$ ``magic exponent'' reported for the peak learning rate in LLM pre-training \citep{bjorck2024scaling}. Given the distance between a single softmax layer and a trained LLM, these agreements call for testing the theory at scale (Appendix~\ref{app:related-detail}).

Several of these limitations lead to one question: can the optimal data
scaling be made faster than $1/3$? One route is the local geometry of the data near decision boundaries. The
exponent follows from the loss and the gradient noise localizing near the
decision boundary with a regular margin density; if the density instead vanishes
there, so that fewer samples come close to the boundary, the sum rule shifts and the optimal
exponent accelerates past $1/3$, while the shifted values remain
optimizer-independent (Appendix~\ref{app:margin}). A second route is the learning
rate schedule. We expect the sum rule and the one-third exponent to survive
self-similar schedules such as cosine decay or warmup--stable--decay, which
preserve the relation between dynamic time and sample count, but they may break
for schedules that are not. Whether a schedule, perhaps an adaptive one, can be
engineered to improve the optimal data scaling is open.

Our results are reminiscent of universality classes in statistical mechanics.
There too, exponents measured in very different systems collapse onto shared
values, and the reason is that a diverging scale localizes the physics to a
singularity, so only near-boundary behavior survives. Our sum rule plays the role
of a scaling relation among critical exponents, such as the Rushbrooke relation~\citep{rushbrooke1963thermodynamics},
in constraining how the individual exponents must move together. Read this way,
the universality we find across optimizers does not bound how fast future models
can scale. It indicates where one has to change the problem in order to move that
bound.

\subsubsection*{Use of AI tools}

In this work, we used generative AI tools for several tasks whose disclosure is
required. They assisted in sharpening several steps of our mathematical derivations. They also assisted writing the proofs in Appendix from author-supplied derivations and provided feedback on methodology and experiments. 

We have not used generative AI tools for the conceptual framework of this work, the
proposal and refinement of the hypotheses tested here, and the design and production of the main
figures. Generating synthetic data sets, implementing methods, assisting with
translation, cleaning or reformatting data sets, and supporting qualitative or
thematic data analysis are either not applicable to this work or were carried
out by the authors.

Additionally, and for tasks whose disclosure is recommended, the manuscript text
was drafted with AI assistance: the authors supplied
paragraph-level notes fixing the content and order of each argument, an AI
assistant drafted the prose, and the authors revised. We
also used AI tools to identify relevant literature---including
\citet{mandt2017sgd}, whose trace identity we had derived independently before
locating it.

We have reviewed all AI-assisted work. We verified AI-assisted derivations, verified all numerical values from our own experiment are correctly cited, and read AI-identified references before citation. We take responsibility for the final content of this work, including text, claims and artifacts produced with the aid of generative AI.

\subsubsection*{Reproducibility statement}

All experiments use the single-layer teacher--student model described in
Section~\ref{sec:model} and Appendix~\ref{app:radial}. Appendix~\ref{app:experiments}
specifies the sweep ranges, the procedure for locating the optimal learning rate,
the plane fit, and the fitting windows and uncertainty estimates used for every
exponent reported in the main text. 
Code for the toy-model training runs and for all analysis and figures will be
released publicly; until then, it is available from the authors upon request.

\bibliographystyle{iclr2027_conference}
\bibliography{references}

\appendix

\section{Relation to landscape and scaling-law literature}
\label{app:related-detail}
Two prominent explanations of power-law loss derive the exponent from power-law structure in the data~\citep{maloney2022solvable, bahri2024explaining, bordelon2024dynamical} and from the non-linearity of a softmax learning peaked distributions \citep{liu2026onethird,kuhn2026boundary} respectively; the mechanism here belongs to the second, and adds the optimizer's preconditioner to it. A separate literature asks instead how the optimal hyperparameters scale, either deriving rules from the invariance of a stochastic differential equation---the linear rule for SGD \citep{goyal2017accurate,smith2018dont} and the square-root rule for adaptive methods \citep{malladi2022sdes}---or measuring them empirically, from critical batch size \citep{mccandlish2018empirical,shallue2019measuring} to hyperparameter scaling laws for LLM pre-training \citep{bjorck2024scaling,li2025predictable,bergsma2025power,ren2026hypersphere}. The two have stayed apart: the first explains how fast the loss falls without reference to hyperparameters, the second locates the optimum without reference to the loss exponent. What we add is the relation between them, in which an optimizer's emergent scaling fixes the dynamic exponents, the hyperparameter exponents, and the data scaling at the optimum together.

In particular, \citet{kuhn2026boundary} derive coupled alignment and residual-variance dynamics from a boundary-layer analysis, and obtain learning rate schedules from them; their residual-variance channel is conceptually analogous to our loss decomposition, although they reach it through statistical-physics order parameters rather than a stochastic process. Their schedule result also separates observables in a way ours does not: annealing improves classification-error scaling while slowing cross-entropy decay in the regime they analyze. Their machinery is limited to SGD at batch size one, and they name richer optimizers as a future direction. Our machinery instead relates the tangential loss directly to the noise covariance, in the manner of \citet{mandt2017sgd} and adjacent stochastic-process treatments of training \citep{yaida2019fdr,malladi2022sdes}. This approach needs neither a covariance proportional to the Hessian (a condition reported to fail in practice \citep{zhang2026superlinear}) nor the two to commute, so extending it to state-dependent preconditioners (nonzero $\expq$) beyond SGD costs no further assumption about how noise relates to curvature. Combining our extensions of these two machineries connects state-dependent preconditioners to a boundary-localization law, from which power-law dynamics emerges without any power-law structure in the data.

Our specific numbers invite comparison with LLM pre-training, though we draw it cautiously. The Chinchilla scaling laws put the data exponent in the range $0.28$--$0.37$ \citep{hoffmann2022chinchilla,besiroglu2024chinchilla}, which brackets $1/3$, and the ``magic exponent'' of \citet{bjorck2024scaling}, which has the optimal learning rate falling as $\data^{-0.32}$, is the closest analogue of our $s_D$, though they optimize the peak learning rate of scheduled runs rather than a constant one; we measure $\data^{-0.35}$ under Adam. \citet{ren2026hypersphere} report $\data^{-0.32}$ for Muon, where we measure $\data^{-0.29}$; their value comes from the HyperP framework with hypersphere-constrained optimization rather than from ordinary Muon, so the comparison is looser still. A third comparison goes the other way: \citet{ren2026hypersphere} also report $s_{B} = 0.558$ against our $0.76$, a discrepancy as large as the agreements just quoted (Figure~\ref{fig:fs4b}). Our own framework separates the two cases. Since $s_{D} = -\expq/3$ and $s_{B} = 1 - \expp$, the quantities that agree are the ones controlled by the preconditioner's scale exponent, and the one that disagrees is controlled by its batch exponent---which is independently the exponent our measurements find anomalous, since $\expp$ and $\expq$ coincide for an idealized $A = V^{-\pwr}$ and ours do not. None of this is something our theory predicts, and given the distance between a single-layer toy model and a trained LLM the agreements are surprising rather than confirmatory; what they call for is a closer look at the connection. Our results may also bear on a second observation, that the reported numbers are close to one another across optimizers. In our experiments Adam, Muon, and SignGD cluster tightly, and they share two properties: they normalize the update magnitude, and their tangential loss decays slowly. If those two properties are enough to fix an optimizer's emergent scaling, they would explain both why these optimizers agree with each other and why they agree with what is measured in LLMs.

\section{Model, coordinates, and the radial equation of motion}
\label{app:radial}

\subsection{Model and coordinates}
Inputs are drawn i.i.d.\ from $\mathcal{N}(0, I_m)$ at every step and RMS-normalized, so that they lie uniformly on the sphere of radius $\sqrt{m}$ with $\mathbb{E}[xx^\top] = I_m$; the analysis below uses only this isotropy and the regular margin density of Appendix~\ref{app:radial}.2.
The teacher $\WT \in \Real^{\ncls \times \ndim}$ has unit
Frobenius norm and assigns the hard label
$y(x) = \arg\max_{c}\,(\WT x)_{c}$, so that the target is the one-hot vector
$p(x) = e_{y(x)}$. The student produces $q(x) = \mathrm{Softmax}(\WS x)$, with
per-sample loss $\ell(\WS; x) = -\log q_{y(x)}(x)$ and population loss
$\Ltot(\WS) = \E_{x}[\ell(\WS; x)]$.

Adding a constant to every logit leaves $q$ unchanged, so $\WS$ is defined only
up to a common shift of its rows. We remove this gauge freedom analytically by identifying $W_S$ with its canonical row-centered representative $P_cW_S$. All parameter-space directions and norms below refer to this identifiable subspace. The labels depend on the teacher only through a softmax and are therefore gauge-invariant, so we replace $\WT$ by its row-centered projection renormalized to unit norm, $P_c\WT/\|P_c\WT\|_F$, which leaves every label unchanged and places teacher and student in the same identifiable subspace; below, $\WT$ denotes this centered teacher.
In the experiments the rescaling is performed on logits by matching standard deviations across classes (Appendix~\ref{app:experiments}.1), which is unchanged by a common shift, so the experiments use the centered teacher automatically.

Write $\WS = \bS \What_{S}$ with $\lVert \What_{S} \rVert_{F} = 1$, and define
the radial and tangent projectors in vectorized parameter space,
\begin{equation}
\Ppar = \mathrm{vec}(\What_{S})\,\mathrm{vec}(\What_{S})^{\top}, \qquad
\Pperp = I - \Ppar .
\end{equation}
The radial gradient is $\gpar = \langle \What_{S}, \nabla \Ltot \rangle_{F}$.

For Appendix~\ref{app:tangential} we also need the raw tangent displacement $e$, obtained by resolving the student along the teacher direction,
\begin{equation}
\mathrm{vec}(\WS) = \omega\ \mathrm{vec}(\What_{T}) + e, \quad e^{\top}\mathrm{vec}(\What_{T}) = 0,\quad \bS^2 = \omega^2 + \lVert e\rVert^2.
\end{equation}
The coefficient of the teacher direction is the overlap $\omega$ rather than $\bS$, since $\bS$ is the norm of the whole student. Writing $\theta$ for the angle between $\WS$ and $\What_{T}$ gives $\omega = \bS \cos \theta$ and $\lVert e\rVert = \bS \sin \theta$, so $\omega = \bS\left(1 + O(\theta^2)\right)$ and the two agree at late times. To leading order in $\theta$, $e \simeq \beta\ \mathrm{vec}(\What_S - \What_T)$: the tangential part of $\What_S - \What_T$ is $O(\theta)$ while its radial part is $O(\theta^2)$ . Raw and angular displacement differ by a factor $\bS$, so their variances differ by $\bS^2$, a distinction that matters in Appendix~\ref{app:tangential} .

The same split applies to the loss. Define the rescaled teacher
$\WR = \bS \WT$, which carries the student's current norm but the teacher's
direction, and let $p_{R}(x) = \mathrm{Softmax}(\WR x)$. The radial loss
$\Lr = \mathrm{CE}(p, p_{R})$ is what a perfectly aligned student of norm $\bS$
still pays, and is therefore a function of $\bS$ alone; the tangential loss
$\Lt = \Ltot - \Lr$ is the extra cost of pointing the wrong way, and vanishes
when $\What_{S} = \WT$. Appendix~\ref{app:tangential} derives $\Lt$; the rest of
this section needs only $\Lr$.

Note that $\Lt$ is not positive by construction. The teacher direction minimizes
the loss only as $\bS \to \infty$; at finite $\bS$ the best direction at fixed
norm differs slightly from $\WT$, so a student that has found it pays
$\Lt < 0$. This finite-norm advantage decays faster than the stochastic
contribution that dominates at late times, so negative values are confined to an
early transient, visible at small batch size in Figure~\ref{fig:fs2b}.

\subsection{Boundary localization}

For an input with teacher label $y$, define the margin
\begin{equation}
\Delta(x) = (\WT x)_{y} - \max_{c \neq y}\,(\WT x)_{c} ,
\end{equation}
the gap between the largest teacher logit and the runner-up. It is non-negative
because the label is the argmax. We assume a \emph{regular decision boundary}:
the margin has a density $p_{\Delta}(\delta)$ on $\delta \geq 0$ with
$0 < p_{\Delta}(0) < \infty$, and the conditional moments of the per-sample
gradient vary continuously as $\delta \to 0^{+}$. Appendix~\ref{app:margin}
relaxes this.

At large $\bS$ the student's softmax saturates, so a sample with
$\bS \Delta \gg 1$ is classified with near-certainty and contributes
exponentially little to the loss or its gradient. The population loss and its
moments are therefore dominated by a boundary layer of width
$\Delta = O(\bS^{-1})$, the mechanism identified by \citet{kuhn2026boundary}. As
$\bS$ grows this layer thins, and the shrinking fraction of samples that still
contribute is what makes progress slow to a power law rather than an
exponential.

\subsection{The radial equation of motion}

Inside the boundary layer only the two largest logits matter, so a sample of
margin $\Delta$ contributes $\ell \simeq \log(1 + e^{-\bS\Delta})$. Evaluating at
the aligned configuration $\What_{S} = \WT$, where the loss reduces to $\Lr$ by
construction, and averaging over the margin distribution,
\begin{equation}
\Lr(\bS) = \int_{0}^{\infty} \log\!\left(1 + e^{-\bS\delta}\right)
p_{\Delta}(\delta)\, d\delta
= \frac{C_{L}}{\bS} + O(\bS^{-2}), \qquad
C_{L} = \frac{\pi^{2}}{12}\,p_{\Delta}(0),
\label{eq:Lrad}
\end{equation}
using $\int_{0}^{\infty}\log(1+e^{-u})\,du = \pi^{2}/12$
\citep{liu2026onethird}. Only the power is structural: a different boundary
kernel changes $C_{L}$ but not the exponent. Differentiating at fixed direction
gives the radial gradient
\begin{equation}
\gpar(\bS) = \frac{\partial \Lr}{\partial \bS}
= -C_{L}\bS^{-2} + O(\bS^{-3}).
\label{eq:gpar}
\end{equation}
This identification holds at alignment. For a student with residual
misalignment there is a further contribution from $\partial \Lt/\partial \bS$;
using the channel laws of Appendix~\ref{app:envelope}, its ratio to the term
kept above is $(1-\expq)\,\Lt/\Lr$, which at the optimum is $(1-\expq)/2$ and is
therefore not small for SGD. It is, however, independent of $\data$, so it
renormalizes $C_{L}$ without changing the exponent.

Under gradient descent, $d\WS/d\tau = -\nabla \Ltot$ with dynamic time
$\tau = \lr t$, the norm obeys $d\bS/d\tau = -\gpar$, so
\begin{equation}
\frac{d\bS}{d\tau} = C_{L}\bS^{-2},
\qquad
\bS(\tau) = \left(3 C_{L} \tau\right)^{1/3} ,
\label{eq:beta-sgd}
\end{equation}
the $t^{1/3}$ growth measured in Figure~\ref{fig:fig1}a (fitted
$0.33 \pm 0.01$). Stochastic tangent updates also inflate $\bS$; that term is
subleading in the drift-dominated window and is treated in
Appendix~\ref{app:validity}.

\section{Preconditioned radial dynamics}
\label{app:precond}

\subsection{The preconditioner as a scaling form}

Write the update as $\Delta \WS = -\lr A \hat{g}$, where $\hat{g}$ is the
mini-batch gradient and $A$ is the preconditioner the optimizer applies. What
matters for the late-time dynamics is not the detailed form of $A$ but how it
scales with the two quantities that vary slowly over training, the batch size
and the student norm:
\begin{equation}
A \sim \bs^{\expp}\,\bS^{\expq}.
\label{eq:Ascaling}
\end{equation}
The exponents $\expp$ and $\expq$ are emergent, in the sense that we measure them
from the macroscopic dynamics rather than read them off the update rule.

A concrete family makes Eq.~\eqref{eq:Ascaling} less abstract. For optimizers
that precondition by the diagonal second moment $V_{i} = \E[\hat{g}_{i}^{2}]$ of
the mini-batch gradient, write $A = V^{-\pwr}$. Boundary localization gives the
per-sample gradient covariance $\Sigma_{ii} \simeq s_{i}/\bS$, and since the
mini-batch mean has covariance $\Sigma/\bs$ while the square of the mean gradient
is subleading at late times, $V_{i} \simeq s_{i}/(\bs\bS)$. Hence
$A = V^{-\pwr} \sim (\bs\bS)^{\pwr}$, that is $\expp = \expq = \pwr$. SGD is the
case $\pwr = 0$ with $A = I$; the idealized Adam or RMSProp preconditioner is
$\pwr = 1/2$; the PowerAdam family interpolates.

Within this family the two exponents coincide, but the values we measure
separate them (Section~\ref{sec:discussion}), so $A = V^{-\pwr}$ is a special
case rather than the general form. Keeping $\expp$ and $\expq$ free costs nothing
in what follows: $\expp$ controls only the batch-size dependence and $\expq$ only
the scale dependence, and Appendix~\ref{app:envelope} shows that the sum rule and
the optimal exponent depend on $\expq$ alone.

\begin{remark}
We expect Eq.~\eqref{eq:Ascaling} to be generic. Any optimizer whose
preconditioner is determined by the current state, and whose only slowly varying
scale is $\bS$, should asymptotically take this form for some $\expp$ and
$\expq$; escaping it appears to require an anisotropy that does not scale
homogeneously with $\bS$. We do not prove this. The empirical support is that
Muon, whose update is orthogonalized and therefore not a diagonal rescaling,
nonetheless lies on the same line as the diagonal family
(Figure~\ref{fig:fig4}a).
\end{remark}

\subsection{The modified radial drift}

With the preconditioner in place the radial drift of Appendix~\ref{app:radial}
becomes $d\bS/d\tau = -\langle \What_{S}, A \bar{g} \rangle_{F}$, where
$\bar{g} = \E_{x}[g]$. The radial gradient need not be aligned coordinate-wise
with a diagonal preconditioner, so we assume a \emph{stable preconditioned radial
shape}: there exists a finite $K > 0$, independent of $\bS$, $\bs$ and $\lr$,
such that
\begin{equation}
-\langle \What_{S}, A \bar{g} \rangle_{F}
= K\,\bs^{\expp}\bS^{\expq-2}\left[1 + o(1)\right].
\end{equation}
This is exact when $A$ becomes asymptotically scalar, and more generally requires
the normalized coordinate profiles of the radial gradient and of $A$ to approach
fixed shapes. Integrating $d\bS/d\tau = K\bs^{\expp}\bS^{\expq-2}$ gives
\begin{equation}
\bS^{3-\expq}(\tau) = \bS^{3-\expq}(0) + (3-\expq)\,K\,\bs^{\expp}\tau .
\end{equation}
Writing $\tau = \lr t$ and $\data = \bs t$ for the number of online samples, so
that $\tau = \lr \data/\bs$, the learning rate and batch size enter only through
the single combination
\begin{equation}
\xc \equiv \lr\,\bs^{\expp-1},
\label{eq:zdef}
\end{equation}
and at late times
\begin{equation}
\bS(\data) \simeq \left[(3-\expq)\,K\,\xc\,\data\right]^{1/(3-\expq)} .
\label{eq:beta-general}
\end{equation}
SGD ($\expp = \expq = 0$) recovers Eq.~\eqref{eq:beta-sgd} with $\xc = \lr/\bs$.
For the idealized Adam preconditioner, $\expq = 1/2$ predicts
$\bS \sim \data^{2/5}$; the measured exponent is $0.44 \pm 0.04$
(Figure~\ref{fig:fig1}b), above the SGD value and consistent with an effective
$\expq$ larger than the nominal $1/2$.

\section{The two loss channels}
\label{app:tangential}

\subsection{The radial channel}

Appendix~\ref{app:radial} gave the radial loss as a function of the norm,
$\Lr = C_{L}/\bS$, and Appendix~\ref{app:precond} gave the norm as a function of
the sample budget. Combining them,
\begin{equation}
\Lr(\data) \simeq C_{L}\left[(3-\expq)\,K\,\xc\,\data\right]^{-1/(3-\expq)}
\sim (\xc\data)^{-1/(3-\expq)} .
\label{eq:Lr-final}
\end{equation}
The radial channel therefore needs no stochastic input: it follows from a static
property of the loss surface together with the deterministic drift. The
tangential channel does not, and occupies the rest of this appendix.

\subsection{Tangential fluctuations and the stationary covariance}

Near alignment the tangential loss is a quadratic form in the raw tangent
displacement $e$ of Appendix~\ref{app:radial},
\begin{equation}
\Lt(e) = \tfrac{1}{2}\,e^{\top}\Hess e + O(\lVert e \rVert^{3}),
\end{equation}
where $\Hess$ is the Hessian restricted to the tangent subspace. Hence
\begin{equation}
\E[\Lt] = \tfrac{1}{2}\Tr(\Hess C), \qquad C = \E[e e^{\top}] ,
\label{eq:Lt-quadform}
\end{equation}
and the tangential loss becomes a property of the stationary distribution of
misalignment rather than something evaluated directly.

Because $\WR$ carries the student's full norm rather than its overlap, $\WS - \WR$ retains a radial component $\omega - \bS =- \lVert e \rVert^2/2\bS$, and expanding about $\WR$ produces a linear term $\gpar^*(\omega-\bS) = O(\lVert e \rVert^2/\bS^3)$ 
alongside the quadratic form. Since the tangential Hessian carries $\bS^{-1}$ against the radial gradient's $\bS^{-2}$, this term is smaller by $O(\bS^{-2})$ and is dropped. The tangential part of $\nabla L(\WR)$ does not vanish either --- the teacher direction is not the finite-norm minimizer --- and is the source of the $\Lt<0$ transient noted in Appendix~\ref{app:radial}.

Locally the displacement obeys a linear stochastic recursion,
\begin{equation}
e_{t+1} = e_{t} - \lr A\left(\Hess e_{t} + \xi_{t}\right), \qquad
\E[\xi_{t}] = 0, \quad \E[\xi_{t}\xi_{t}^{\top}] = \Sig/\bs ,
\label{eq:e-recursion}
\end{equation}
which contains exactly the two competing terms drawn in
Figure~\ref{fig:fig2}a: a restoring drift $-\lr A\Hess e$ set by the curvature,
and stochastic injection $-\lr A\xi$ set by mini-batch noise. Here $A$ may be any
symmetric positive-definite preconditioner that is approximately constant over
the local equilibration time. Propagating the covariance gives
\begin{equation}
C_{t+1} = (I - \lr A\Hess)\,C_{t}\,(I - \lr \Hess A)
+ \frac{\lr^{2}}{\bs}A\Sig A ,
\end{equation}
and imposing stationarity $C_{t+1} = C_{t} = C$, then cancelling $C$ and
dividing by $\lr$, yields the exact discrete preconditioned Lyapunov equation
\begin{equation}
A\Hess C + C\Hess A = \lr\,A\Hess C\Hess A + \frac{\lr}{\bs}A\Sig A .
\label{eq:lyapunov}
\end{equation}

\subsection{The trace identity}

Equation~\eqref{eq:lyapunov} could be solved for $C$, but by
Eq.~\eqref{eq:Lt-quadform} we only need $\Tr(\Hess C)$, and that follows from the
Lyapunov equation directly.

\begin{proposition}[Preconditioned tangential-loss identity]
\label{prop:trace}
At a stationary state of Eq.~\eqref{eq:e-recursion},
\begin{equation}
\E[\Lt] = \frac{\lr}{4\bs}\Tr(A\Sig)
+ \frac{\lr}{4}\Tr(\Hess C\Hess A)
= \frac{\lr}{4\bs}\Tr(A\Sig) + O(\lr^{2}) .
\label{eq:trace-identity}
\end{equation}
\end{proposition}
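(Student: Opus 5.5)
The plan is to take the trace of the discrete Lyapunov equation~\eqref{eq:lyapunov} after multiplying on the left by $A^{-1}$. Since $A$ is symmetric positive-definite, it is invertible. We use the cyclic property of the trace together with the symmetry of $\Hess$, $C$, $A$ and $\Sig$.

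\textbf{Step 1 (left side).} Multiplying the left side by $A^{-1}$ gives $\Hess C + A^{-1}C\Hess A$. Its trace is
\[
\Tr(\Hess C) + \Tr(A^{-1}C\Hess A) = \Tr(\Hess C) + \Tr(C\Hess) = 2\Tr(\Hess C),
\]
since cycling $A$ to the front cancels $A^{-1}$ and $\Tr(C\Hess)=\Tr(\Hess C)$.

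\textbf{Step 2 (right side).} Multiplying the right side by $A^{-1}$ gives $\lr\,\Hess C\Hess A + \frac{\lr}{\bs}\Sig A$. Its trace is
\[
\lr\Tr(\Hess C\Hess A) + \frac{\lr}{\bs}\Tr(A\Sig).
\]

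\textbf{Step 3 (combine).} Equating the two traces, dividing by $4$, and using Eq.~\eqref{eq:Lt-quadform}, $\E[\Lt]=\tfrac12\Tr(\Hess C)$, yields the first equality in~\eqref{eq:trace-identity}. No commutation between $A$ and $\Hess$, and no proportionality $\Sig\propto\Hess$, is needed, because only the cyclic property is used.

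\textbf{Step 4 (order of the remainder).} It remains to show that $\frac{\lr}{4}\Tr(\Hess C\Hess A)=O(\lr^2)$. For this we bound $C$ itself at $O(\lr)$. Rewrite~\eqref{eq:lyapunov} as
\[
A\Hess C + C\Hess A - \lr A\Hess C\Hess A = \frac{\lr}{\bs}A\Sig A .
\]
For small $\lr$, the operator $C\mapsto A\Hess C + C\Hess A - \lr A\Hess C\Hess A$ is a perturbation of the continuous Lyapunov operator $C\mapsto A\Hess C + C\Hess A$. That operator is invertible because $A\Hess$ is similar to $A^{1/2}\Hess A^{1/2}$, which is positive definite when $\Hess$ is positive definite on the tangent subspace, the restoring-curvature assumption. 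Hence $C=\lr C_1+O(\lr^2)$ with $C_1$ independent of $\lr$, and $\lr\Tr(\Hess C\Hess A)=O(\lr^2)$.

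The main obstacle is this last step. It requires $\Hess$ positive definite on the tangent subspace and stability of the recursion (the spectral radius of $I-\lr A\Hess$ less than one), so that a stationary $C$ exists at all. The exact identity in Steps 1 to 3 is purely algebraic. The $O(\lr^2)$ statement should be read at fixed $A,\Hess,\Sig,\bs$, with the constant depending on them.
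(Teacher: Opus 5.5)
Your Steps 1--3 are exactly the paper's proof: left-multiply Eq.~\eqref{eq:lyapunov} by $A^{-1}$, take the trace, use cyclic invariance to obtain $2\Tr(\Hess C) = \lr\Tr(\Hess C\Hess A) + \tfrac{\lr}{\bs}\Tr(A\Sig)$, and then apply Eq.~\eqref{eq:Lt-quadform}. Your Step 4 goes beyond the paper, which only asserts the $O(\lr^{2})$ remainder under its small-step assumption; your perturbation argument around the continuous Lyapunov operator $C \mapsto A\Hess C + C\Hess A$ is a correct justification, since that operator is invertible because $A\Hess$ is similar to the positive-definite $A^{1/2}\Hess A^{1/2}$.
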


\begin{proof}
Left-multiply Eq.~\eqref{eq:lyapunov} by $A^{-1}$ and take the trace:
$\Tr(\Hess C) + \Tr(A^{-1}C\Hess A) = \lr\Tr(\Hess C\Hess A)
+ \tfrac{\lr}{\bs}\Tr(\Sig A)$. Cyclic invariance gives
$\Tr(A^{-1}C\Hess A) = \Tr(C\Hess) = \Tr(\Hess C)$, so
$2\Tr(\Hess C) = \lr\Tr(\Hess C\Hess A) + \tfrac{\lr}{\bs}\Tr(A\Sig)$.
Equation~\eqref{eq:Lt-quadform} then gives the result.
\end{proof}

The Hessian has cancelled from the leading term. Solving
Eq.~\eqref{eq:lyapunov} for $C$ would require knowing how the curvature and the
noise are oriented relative to one another; taking the trace does not. The
Hessian still shapes $C$, but it does not survive into the leading trace-level
tangential loss.

It is worth being precise about what Proposition~\ref{prop:trace} assumes. The
expansion is local and quadratic in $e$; the tangent distribution is taken to be
stationary; the step is small, so the $\Tr(\Hess C\Hess A)$ term is dropped; $A$
is approximately constant over the local equilibration time; and the tangent
distribution equilibrates faster than $\bS$, $A$, $\Hess$ and $\Sig$ drift. The
last two are separations of timescale rather than structural conditions, and
along a power-law trajectory the macroscopic quantities vary at relative rate
$O(1/\tau)$, so both separations widen as training proceeds. The small-step
condition behaves differently: the ratio of the dropped $\Tr(\Hess C\Hess A)$
term to the one retained scales as $\lr\bs^{\expp}\bS^{\expq-1}$, which decays
only for $\expq < 1$, and slowly when $\expq$ is close to one.

For Adam, Muon and SignGD we do not derive these conditions from the update rules. Adam's preconditioner fluctuates and is correlated with the current gradient; Muon's update is orthogonalized rather than a diagonal rescaling; sign-based updates discard gradient magnitude entirely. That these optimizers nonetheless obey the resulting scaling relations is an empirical finding, and the effective description $A \sim B^\expp \bS^\expq$ should be read as a fitted summary of their behavior rather than a consequence of their definitions.

What the identity does not require is any relation between the noise covariance
and the curvature: neither $\Sig \propto \Hess$ nor $[\Hess, \Sig] = 0$. This
matters because such a proportionality is a property of the landscape rather
than something one can arrange, and it is reported to fail in practice
\citep{zhang2026superlinear}.

\subsection{Scaling of the tangential channel}

Inserting the preconditioner scaling $A \sim \bs^{\expp}\bS^{\expq}$ of
Eq.~\eqref{eq:Ascaling} and the covariance $\Sig_{ii} \simeq s_{i}/\bS$ gives
$\Tr(A\Sig) = C_{T}\,\bs^{\expp}\bS^{\expq-1}[1 + o(1)]$ for a geometric
coefficient $C_{T}$, so that
\begin{equation}
\E[\Lt] \simeq \tfrac{1}{4}C_{T}\,\lr\,\bs^{\expp-1}\,\bS^{\expq-1}
= \tfrac{1}{4}C_{T}\,\xc\,\bS^{\expq-1} .
\end{equation}
Substituting the norm trajectory of Eq.~\eqref{eq:beta-general},
\begin{equation}
\Lt(\data) \sim \xc^{\,2/(3-\expq)}\,\data^{-(1-\expq)/(3-\expq)} ,
\label{eq:Lt-final}
\end{equation}
since $1 + (\expq-1)/(3-\expq) = 2/(3-\expq)$. Equations~\eqref{eq:Lr-final} and
\eqref{eq:Lt-final} are the two channel laws quoted in
Section~\ref{sec:results}.

For SGD ($\expp = \expq = 0$) the identity reproduces a result of
\citet{kuhn2026boundary} in a different form. Mode by mode,
$\E[e_{i}^{2}] = \lr \Sig_{ii}/(2\bs \Hess_{ii})$, and since boundary
localization makes both $\Hess_{ii}$ and $\Sig_{ii}$ scale as $\bS^{-1}$, their
ratio is asymptotically constant and $\E[e_{i}^{2}] \asymp \lr/\bs$. The raw
misalignment therefore saturates at a noise floor, exactly as they find at
$\bs = 1$. The tangential loss nevertheless decays, because
$\Lt = \tfrac{1}{2}\Tr(\Hess C)$ and the curvature flattens as
$\Hess \sim \bS^{-1}$ while the boundary layer thins. A constant misalignment
floor and a decaying tangential loss are thus the same statement, and the
measured $\at \simeq 1/3$ for SGD agrees with their result rather than
conflicting with it.

\section{The sum rule, the optimum, and the hyperparameter exponents}
\label{app:envelope}

\subsection{The two channels}

Equations~\eqref{eq:Lr-final} and \eqref{eq:Lt-final} give the two channels as
functions of the optimizer coordinate $\xc$ and the sample budget $\data$,
\begin{equation}
\Lr \simeq c_{r}\,(\xc\data)^{-\ar},
\qquad
\Lt \simeq c_{t}\,\xc^{\,2\ar}\,\data^{-\at},
\label{eq:channels-app}
\end{equation}
with coefficients $c_{r}, c_{t}$ that depend on the boundary geometry but not on
$\lr$, $\bs$ or $\data$, and with time-scaling exponents
\begin{equation}
\boxed{\;\ar = \frac{1}{3-\expq}, \qquad \at = \frac{1-\expq}{3-\expq}.\;}
\label{eq:exponents}
\end{equation}
We have used $1 - \at = 2/(3-\expq) = 2\ar$ to write the $\xc$-exponent of $\Lt$
as $2\ar$, which simplifies everything below. At fixed $\lr$ and $\bs$ we have
$\data \propto t$, so Eq.~\eqref{eq:exponents} gives the exponents of the decay
against training step plotted in Figure~\ref{fig:fig4}a.

\subsection{The sum rule}

Both exponents in Eq.~\eqref{eq:exponents} are fixed by the single emergent
quantity $\expq$, so eliminating it leaves a relation between them. From
$\ar = 1/(3-\expq)$ we have $\expq = 3 - 1/\ar$, and substituting,
\begin{equation}
\at = \frac{1-\expq}{3-\expq} = \left(\frac{1}{\ar} - 2\right)\ar
= 1 - 2\ar ,
\end{equation}
that is
\begin{equation}
\boxed{\;2\ar + \at = 1.\;}
\label{eq:sumrule}
\end{equation}
This is a statement about training dynamics at fixed hyperparameters; no
optimization enters it. Our theory does not predict $\expq$ for a given
optimizer, but whatever $\expq$ turns out to be, the pair $(\ar, \at)$ lies on
this line. That is the sense in which the dashed line of
Figure~\ref{fig:fig4}a is a parameter-free prediction.

\subsection{The optimum}

We now minimize the total loss $\Ltot = \Lr + \Lt$ over $\xc$ at fixed $\data$.

\begin{theorem}
\label{thm:envelope}
Under Eq.~\eqref{eq:channels-app}, the minimizing coordinate and the loss at that
minimum satisfy
\begin{equation}
\xcopt \propto \data^{-\expq/3},
\qquad
\Lr^{\ast} = 2\,\Lt^{\ast},
\qquad
\Ltot^{\ast} \propto \data^{-1/3},
\end{equation}
the last of these independent of $\expq$.
\end{theorem}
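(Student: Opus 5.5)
The plan is a direct one-variable minimization of $\Ltot(\xc) = c_{r}(\xc\data)^{-\ar} + c_{t}\xc^{2\ar}\data^{-\at}$ at fixed $\data$, using the channel laws of Eq.~\eqref{eq:channels-app} with $c_r, c_t > 0$. The minimizing coordinate is then substituted back. Throughout we use the exponents of Eq.~\eqref{eq:exponents}, and in particular the identity $1-\at = 2\ar$ recorded there.

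\textbf{Step 1: existence and uniqueness of the minimizer.} On $\xc \in (0,\infty)$ the first term is strictly decreasing, from $+\infty$ to $0$. The second is strictly increasing, from $0$ to $+\infty$, because $\ar > 0$; this needs $\expq < 3$, which also keeps $\ar$ finite and positive. So $\Ltot \to \infty$ at both ends. Moreover $\xc\,\partial_\xc \Ltot = -\ar \Lr + 2\ar \Lt$ is strictly increasing in $\xc$. Hence there is exactly one stationary point, and it is the global minimum.

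\textbf{Step 2: the balance condition.} Setting $\xc\,\partial_\xc\Ltot = 0$ gives $\ar \Lr = 2\ar \Lt$, so $\Lr^{\ast} = 2\Lt^{\ast}$ immediately. This is the general fact that for a sum of power laws in $\xc$ with exponents $-\ar$ and $2\ar$, the terms at the minimum are in the inverse ratio of their exponents. It also gives $\Lt^{\ast}/\Ltot^{\ast} = 1/3$.

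\textbf{Step 3: solving for $\xcopt$.} Writing out the balance gives $c_{r}\ar\,\xc^{-\ar}\data^{-\ar} = 2c_{t}\ar\,\xc^{2\ar}\data^{-\at}$. Hence $\xc^{3\ar} \propto \data^{\at-\ar} = \data^{1-3\ar}$, using $\at = 1-2\ar$ (the sum rule, Eq.~\eqref{eq:sumrule}). Therefore $\xcopt \propto \data^{(1-3\ar)/(3\ar)}$. Since $1/\ar = 3-\expq$, the exponent is $(3-\expq-3)/3 = -\expq/3$.

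\textbf{Step 4: the optimal loss.} Substitute into the radial channel: $\Lr^{\ast} \propto (\xcopt\data)^{-\ar} = \data^{-\ar(1-\expq/3)} = \data^{-\ar(3-\expq)/3} = \data^{-1/3}$. Then $\Ltot^{\ast} = \tfrac32\Lr^{\ast} \propto \data^{-1/3}$. The $\expq$-dependence cancels because $\ar(3-\expq) = 1$ exactly. Equivalently, $\Ltot^{\ast}\propto\data^{-(2\ar+\at)/3}$, which reduces to $\data^{-1/3}$ by the sum rule.

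\textbf{Expected obstacle.} There is no real analytic obstacle. The only care needed is bookkeeping: $\expq$ must be carried consistently through the exponent $2\ar$ of $\xc$ (rather than reverting to $2/(3-\expq)$ by mistake), and Step 1 must note that the $\xc$-exponents have opposite signs, which is what guarantees a minimum. The prefactors $c_r, c_t$, and the $(3-\expq)K$ constants they absorb, affect only the proportionality constants.
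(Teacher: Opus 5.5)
Your proposal is correct and follows the paper's proof essentially step for step: you set $\xc\,\partial_{\xc}\Ltot = 0$ to get $\Lr^{\ast} = 2\Lt^{\ast}$, solve $\xc^{3\ar} \propto \data^{\at-\ar}$, and substitute back using $\ar(3-\expq) = 1$. The differences are minor: you rewrite $\at - \ar$ as $1 - 3\ar$ via the sum rule where the paper writes $-\expq\ar$, and your Step~1 adds the existence and uniqueness of the minimizer, which the paper leaves implicit.
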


\begin{proof}
Differentiating Eq.~\eqref{eq:channels-app},
\begin{equation}
\frac{\partial \Ltot}{\partial \xc}
= -\ar\,c_{r}\,\xc^{-\ar-1}\data^{-\ar}
+ 2\ar\,c_{t}\,\xc^{\,2\ar-1}\data^{-\at} = 0 .
\end{equation}
Multiplying by $\xc/\ar$ gives
$c_{r}\xc^{-\ar}\data^{-\ar} = 2c_{t}\xc^{\,2\ar}\data^{-\at}$, which is exactly
$\Lr^{\ast} = 2\Lt^{\ast}$: at the optimum the two channels sit in a fixed
ratio, independent of $\expq$, of $\data$, and of the coefficients. Solving the
same equation for $\xc$,
\begin{equation}
\xc^{\,3\ar} = \frac{c_{r}}{2c_{t}}\,\data^{\at - \ar},
\end{equation}
and since $\at - \ar = -\expq/(3-\expq) = -\expq\,\ar$, we get
$\xcopt \propto \data^{-\expq/3}$. Substituting back,
\begin{equation}
\Ltot^{\ast} \propto (\xcopt\data)^{-\ar}
\propto \data^{-\ar(1 - \expq/3)}
= \data^{-\ar(3-\expq)/3}
= \data^{-1/3},
\end{equation}
using $\ar(3-\expq) = 1$. Both channels carry the same $\data$-exponent, so the
sum does too.
\end{proof}

The $\expq$-dependence cancels in the last step, which is the formal content of Result 2. The same minimization with $\ar$ and $\at$ treated as independently measured, rather than both expressed through $q$, gives $\aD = (2\ar + \at)/3$ — exactly one third of the sum-rule combination — so $\aD = 1/3$ if and only if $2\ar + \at = 1$. This is the form quoted in Figure~\ref{fig:fig4}b. Writing the $\xc$-exponent of $\Lt$ as $1-\at$ rather than $2\ar$, which is the same thing on the line but not off it, would instead give $\ar/(1+\ar-\at)$; at the measured exponents the two differ by about one percent.

\subsection{From dynamic to hyperparameter exponents}

The coordinate $\xc$ is not measured directly; the optimal learning rate is.
Two relations convert between them. At fixed batch size $\xc \propto \lr$, so
Theorem~\ref{thm:envelope} gives
\begin{equation}
\lropt \propto \data^{s_{D}}, \qquad s_{D} = -\expq/3 .
\end{equation}
Separately, because $\xcopt$ depends on $\data$ alone, the combination
$\lr\bs^{\expp-1}$ must be held fixed as $\bs$ varies, so
\begin{equation}
\lropt \propto \bs^{s_{B}}, \qquad s_{B} = 1 - \expp .
\end{equation}
The second follows from the definition of $\xc$ in Eq.~\eqref{eq:zdef} rather
than from the optimization.

Eliminating $\expq$ between $s_{D} = -\expq/3$ and $\ar = 1/(3-\expq)$ leaves a
relation between a dynamic exponent and a hyperparameter one,
\begin{equation}
\boxed{\;s_{D} = -1 + \frac{1}{3\ar}\;}
\label{eq:sD-ar}
\end{equation}
so the two are not independent. This curve passes through
$(\ar, s_{D}) = (1/3, 0)$, the SGD prediction, and terminates at
$(1/2, -1/3)$, which is the limit $\at \to 0$, or $\expq \to 1$, where the
tangential channel stops decaying altogether.
Equation~\eqref{eq:sD-ar} uses the asymptotic identification $\ar = 1/(3-q)$. The measured $\ar$ exceeds that value by the finite-$\bS$ factor discussed in Remark~\ref{rem:adam-edge}, by roughly 8\% and by nearly the same amount for every optimizer, so the points in Figure~\ref{fig:fs4a} are displaced along the horizontal axis together rather than scattered. The comparison should therefore be read as a test of the functional form rather than of the absolute placement along it.

Note that $s_{D}$ depends only on $\expq$ and $s_{B}$ only on $\expp$. The two
exponents of the preconditioner are therefore measured by separate
experiments, which is what allows the comparison discussed in
Section~\ref{sec:discussion}.

\begin{remark}[Multiple estimates of $q$ from Adam]
\label{rem:adam-edge}
Adam's three routes to the
exponent $\expq$ do not quite agree: the norm exponent of Figure~\ref{fig:fig1}b gives
$\expq \simeq 0.74$, the radial exponent $\ar = 0.48$ gives $0.92$, and
$s_{D} = -0.35$ gives $1.05$. The last lies outside the admissible range, since
$\expq > 1$ would make $\at$ negative. Similarly, SignGD has $s_{D} = -0.37$. The first two are not independent measurements. The norm and radial-loss exponents are related by $\ar = \alpha_\beta \cdot |d \log \Lr / d \log \bS|$, which the asymptotic law $\Lr \propto 1/\bS$ fixes at $1$; measured over the fitting window this slope is $1.04-1.15$, decreasing monotonically with $\bS$ and taking nearly identical values under SGD ($1.078$) and Adam ($1.091$), as the $O(1/\bS)$ correction from expanding $p_\Delta$ about the boundary predicts. It accounts for the 9\% difference between the two exponents, so $\ar = 0.48$ is an accurate local slope while inverting it to $q = 0.92$ overstates the asymptotic exponent. In the end, Adam carries the least discriminating weight of the seven
optimizers. What establishes the line instead is the three PowerAdam points,
which span its interior, together with Muon and SignGD, which lie on it while
belonging to no $A = V^{-\pwr}$ family at all.
\end{remark}

\subsection{The loss ratio as a diagnostic}

The ratio of the two channels,
$R \equiv \Lt/\Lr = (c_{t}/c_{r})\,\xc^{\,3\ar}\,\data^{\ar-\at}$, equals
$1/2$ at the optimum by Theorem~\ref{thm:envelope}. Taking the ratio of $R$ to
that value eliminates the coefficients and the $\data$-dependence, leaving
$2R = (\xc/\xcopt)^{3\ar}$, that is
\begin{equation}
\xcopt = (2R)^{-1/(3\ar)}\,\xc .
\label{eq:diagnostic}
\end{equation}
A single run at any $(\lr, \bs)$ therefore locates the optimum
multiplicatively, without a sweep. Since $\expp < 1$, lowering $\xc$ means
raising $\bs$ or lowering $\lr$: if $R > 1/2$ the tangential channel is carrying
too much of the loss and one should raise the batch size or lower the learning
rate, and if $R < 1/2$ the reverse.

Equivalently, since $\Ltot = \Lr + \Lt = 3\Lt$ at the optimum,
\begin{equation}
\boxed{\;\Lt/\Ltot = 1/3 \quad \text{at the optimum.}\;}
\end{equation}
The tangential channel carries exactly one third of the loss when the
hyperparameters are tuned. Unlike the time-scaling and data-scaling exponents
this is a pure number, requiring no asymptotics in $\data$; testing it needs only the plane fit that locates $\xcopt$, and no parameter fitted to the ratio itself.

Figure~\ref{fig:fs4c} tests both statements at once. For every optimizer,
$\Lt/\Ltot$ measured across all learning rates, batch sizes and training times
collapses onto a single function of $\xc/\xcopt$, and that function passes
through $1/3$ at $\xc = \xcopt$. The collapse confirms that the hyperparameters
enter only through $\xc$; the crossing confirms the ratio. Deviations appear only for $\Lt/\Ltot \lesssim 0.1$, where $\Lt$ is a
small difference of two larger quantities and the finite-norm transient of
Appendix~\ref{app:radial} can drive it negative.

\subsection{Summary and comparison with measurement}

Table~\ref{tab:exponents} collects the predictions and the measured values. The
nominal Adam column uses $\expp = \expq = 1/2$, the idealized second-moment
preconditioner of Appendix~\ref{app:precond}.

\begin{table}[h]
\centering
\begin{tabular}{lcccc}
\toprule
& general & SGD ($\expq = 0$) & Adam (nominal) & Adam (measured) \\
\midrule
$\ar$   & $1/(3-\expq)$          & $1/3$ & $2/5$  & $0.48 \pm 0.03$ \\
$\at$   & $(1-\expq)/(3-\expq)$  & $1/3$ & $1/5$  & $0.08 \pm 0.01$ \\
$s_{D}$ & $-\expq/3$             & $0$   & $-1/6$ & $-0.35 \pm 0.01$ \\
$s_{B}$ & $1-\expp$              & $1$   & $1/2$  & $0.76 \pm 0.01$ \\
$\aD$   & $1/3$                  & $1/3$ & $1/3$  & $0.341 \pm 0.003$ \\
\bottomrule
\end{tabular}
\caption{Predicted and measured exponents. Every nominal entry for Adam is
displaced by the measurement, yet $\aD$ is unchanged: the measured $\ar$
corresponds to an effective $\expq \simeq 0.92$ by inversion, and the measured $s_{B}$ to an
effective $\expp \simeq 0.24$, a separation discussed in
Section~\ref{sec:discussion}. The final row is measured for all seven
optimizers in Figure~\ref{fig:fig4}b and Table~\ref{tab:envelope-cutoff}.}
\label{tab:exponents}
\end{table}

\section{Shifted exponents for a general margin density}
\label{app:margin}

Appendices~\ref{app:radial}--\ref{app:envelope} assumed a regular decision
boundary, $0 < p_{\Delta}(0) < \infty$. Here we relax that to
\begin{equation}
p_{\Delta}(\delta) \simeq \rho_{\nu}\,\delta^{\nu}
\quad \text{as } \delta \to 0^{+}, \qquad \nu > -1,
\label{eq:nu-density}
\end{equation}
so that $\nu = 0$ recovers the regular case and $\nu > 0$ describes a density
that vanishes continuously at the boundary. The point of this appendix is that the shift changes the numbers but
not the structure: the exponents still collapse onto a line free of the
optimizer, and the optimal data scaling is still optimizer-independent.

\subsection{Primitives}

Repeating the boundary-layer integral of Appendix~\ref{app:radial} with
Eq.~\eqref{eq:nu-density} and substituting $u = \bS\delta$,
\begin{equation}
\Lr(\bS) \simeq \rho_{\nu}\,\bS^{-(\nu+1)}
\int_{0}^{\infty} u^{\nu}\log\!\left(1+e^{-u}\right) du
\;\sim\; \bS^{-(\nu+1)} ,
\end{equation}
and the covariance, being the same kind of boundary-layer average with a
different kernel, carries the same power, $\Sig_{ii} \sim \bS^{-(\nu+1)}$. In
the general notation of Appendices~\ref{app:radial}
and~\ref{app:tangential}, in which the radial loss and the gradient covariance
carry independent powers of the norm,
\begin{equation}
\Lr \sim \bS^{1-\expg}, \qquad \Sig_{ii} \sim \bS^{-\exps},
\end{equation}
this reads
\begin{equation}
\expg = \nu + 2, \qquad \exps = \nu + 1 .
\end{equation}
Only these two numbers carry the data geometry; the derivations of
Appendices~\ref{app:precond} and \ref{app:tangential} are otherwise unchanged.

\subsection{Shifted exponents}

With general $\expg$ and $\exps$, the radial drift
$d\bS/d\tau \sim \bs^{\expp}\bS^{\expq-\expg}$ integrates to
$\bS \sim (\xc\data)^{1/m}$ with $m = \expg + 1 - \expq$, and the two channels
become
\begin{equation}
\Lr \sim (\xc\data)^{-\ar}, \qquad
\Lt \sim \xc^{\,\kappa\ar}\,\data^{-\at},
\end{equation}
with
\begin{equation}
\ar = \frac{\expg-1}{m}, \qquad
\at = \frac{\exps-\expq}{m}, \qquad
\kappa = \frac{\expg+1-\exps}{\expg-1} .
\end{equation}
For $\nu = 0$ this gives $\kappa = 2$ and recovers
Eq.~\eqref{eq:channels-app}. Eliminating $\expq$ exactly as in
Appendix~\ref{app:envelope} leaves the shifted sum rule
\begin{equation}
\boxed{\;\kappa\,\ar + \at = 1, \qquad
\kappa = \frac{2}{\nu+1}. \;}
\label{eq:sumrule-nu}
\end{equation}
Minimizing $\Lr + \Lt$ over $\xc$ gives $\Lr^{\ast} = \kappa\,\Lt^{\ast}$ and,
on the line Eq.~\eqref{eq:sumrule-nu},
\begin{equation}
\boxed{\;\aD = \frac{1}{1+\kappa} = \frac{\nu+1}{\nu+3}. \;}
\label{eq:aD-nu}
\end{equation}
The optimal learning rate scales as $\lropt \propto \data^{-\expq/(\nu+3)}$ at
fixed batch size, while $s_{B} = 1 - \expp$ is unchanged, since it comes from
the definition of $\xc$ rather than from the data geometry.

\subsection{What survives the shift}

Table~\ref{tab:nu} collects the results. Three things are preserved.
First, $\expq$ drops out of both
Eq.~\eqref{eq:sumrule-nu} and Eq.~\eqref{eq:aD-nu}, so the sum rule remains a
line in the $(\ar, \at)$ plane that every optimizer must lie on, and the optimal
data scaling remains optimizer-independent. Second, the relation between the
loss ratio at the optimum and the optimal exponent persists: since
$\Ltot = (1+\kappa)\Lt^{\ast}$ there,
\begin{equation}
\Lt/\Ltot = \frac{1}{1+\kappa} = \aD \qquad \text{at the optimum,}
\end{equation}
so the tangential fraction of the loss always equals the optimal data-scaling
exponent, of which $\Lt/\Ltot = \aD = 1/3$ is the regular-boundary case. Third,
$\aD$ is increasing in $\nu$, so a density that vanishes at the boundary
accelerates the optimum without breaking optimizer-independence.

\begin{table}[h]
\centering
\begin{tabular}{lccc}
\toprule
& $\nu = 0$ & $\nu = 1$ & general $\nu$ \\
\midrule
sum rule & $2\ar + \at = 1$ & $\ar + \at = 1$
& $\tfrac{2}{\nu+1}\ar + \at = 1$ \\
$\aD$ & $1/3$ & $1/2$ & $(\nu+1)/(\nu+3)$ \\
$\Lr^{\ast}/\Lt^{\ast}$ & $2$ & $1$ & $2/(\nu+1)$ \\
\bottomrule
\end{tabular}
\caption{Effect of the margin density on the sum rule and the optimum. The
exponents shift with $\nu$, but none of these quantities depends on the
optimizer.}
\label{tab:nu}
\end{table}

\begin{remark}
The acceleration has a limit: $\alpha_D \to 1$ as $\nu \to \infty$. Beyond it lies a hard margin gap $\delta_{\min} > 0$, which Eq.~\eqref{eq:nu-density} does not describe. There the loss and the radial gradient both decay as $e^{-\bS\delta_{\min}}$, so the drift itself is exponentially weak: under gradient descent the norm grows only logarithmically, $\bS \simeq \delta_{\min}^{-1}\log\tau$, and the loss falls as $\tau^{-1}$ up to logarithmic factors, the rate known for separable data \citep{soudry2018implicit, ji2018risk} and the $\alpha_D \to 1$ limit above. The power-law regime studied here is therefore the consequence of samples arriving arbitrarily close to the decision boundary, and any mechanism that thins their supply accelerates training.
\end{remark}

\section{Validity conditions}
\label{app:validity}

\subsection{Two alternatives ruled out by the sum-rule line}

Because Eq.~\eqref{eq:sumrule} contains no fitted quantity, it also serves as a
test: mechanisms that would produce power-law channels by a different route
generally land off the line. Two are worth naming.

\paragraph{A non-decaying covariance sector.}
Suppose part of the gradient covariance does not decay with $\bS$. Partition the
active coordinates into a set with $\Sig_{ii} \to h_{i} > 0$ and the rest with
$\Sig_{ii} \simeq s_{i}/\bS$. The trace identity of
Proposition~\ref{prop:trace} then gives a tangential loss of the form
\begin{equation}
\E[\Lt] = \underbrace{\frac{\lr}{4\bs}\sum_{i \in \mathrm{hard}}A_{ii}h_{i}}
_{\propto\, \xc,\ \text{independent of } \data}
\;+\; \text{(decaying bulk)} ,
\end{equation}
that is a constant ceiling plus a decaying remainder. A ceiling means
$\at = 0$, and because it places no constraint on the radial channel, $\ar$
would be free: points would lie anywhere along the horizontal line $\at = 0$
rather than on the sum rule.

This alternative deserves naming because Adam's measured $\at = 0.08$ is close
to zero, so the tangential data alone are superficially consistent with it. What
separates the two is $\ar$. A hard sector leaves $\ar$ unconstrained, whereas the
sum rule fixes it at $(1 - \at)/2 = 0.46$; the measured value is $0.48$. We
note also that the realizable hard-label teacher of Appendix~\ref{app:radial}
provides no source for such a sector, which requires a non-vanishing probability
of exactly zero margin, that is genuinely ambiguous labels.

\paragraph{Noise-driven norm growth.}
Appendix~\ref{app:precond} kept only the deterministic term in the radial drift.
Stochastic tangent updates also inflate the norm, and the It\^{o} expansion gives
\begin{equation}
\frac{d\bS}{d\tau} = K\bs^{\expp}\bS^{\expq-2}
+ \frac{\lr J}{2}\bs^{2\expp-1}\bS^{2\expq-2} + \cdots ,
\label{eq:dbetadtau}
\end{equation}
whose second term is negligible when
\begin{equation}
R_{\mathrm{rad}} = \frac{\lr J}{2K}\,\bs^{\expp-1}\bS^{\expq} \ll 1 .
\label{eq:Rrad}
\end{equation}
For SGD ($\expp = \expq = 0$) this is $O(\lr/\bs)$ and independent of $\bS$:
both terms carry the same $\bS^{-2}$, so noise changes the coefficient but not
the $1/3$ exponent. For Adam ($\expp = \expq = 1/2$) it is
$O(\lr\sqrt{\bS/\bs})$ and grows with the norm, so the drift-dominated
description holds in a window that closes at large $\bS$. The neglected radial-noise contribution does not become parametrically small along the optimal envelope: at the optimum $z^\ast \propto \data^{-\expq/3}$ and $\bS^\ast \propto \data^{1/3}$, so $z^\ast(\bS^\ast)^{\expq}$ is independent of $\data$ and the dropped term keeps a fixed ratio to the deterministic drift as the budget grows. It nevertheless leaves the optimal exponent unchanged. Written against the sample budget, with $d\tau = (\lr/\bs)\,d\data$, the two terms of Eq.~(45) become
\begin{equation}
\frac{d\bS}{d\data} = K z\, \bS^{\expq-2} + \frac{J}{2}\, z^2 \bS^{2\expq-2},
\end{equation}
in which $\bs$ enters only through $z$, and which is invariant under $\data \to \lambda\data$, $z \to \lambda^{-\expq/3} z$, $\bS \to \lambda^{1/3}\bS$. At late times the solution therefore takes the form $\bS = \data^{1/3} b(z\data^{\expq/3})$, and since $\Lr \propto \bS^{-1}$ and $\Lt \propto z\bS^{\expq-1}$, the total loss takes the form $L = \data^{-1/3} G(z\data^{\expq/3})$. Minimizing over $z$ at fixed $\data$ then gives $z^\ast \propto \data^{-\expq/3}$ and $L^\ast \propto \data^{-1/3}$ whatever the value of $R_{\mathrm{rad}}$: the noise term reshapes the scaling function $G$ but not the optimal exponent. Along a trajectory at fixed $z$, by contrast, the argument $z\data^{\expq/3}$ grows with $\data$ for $\expq > 0$, so the dynamic exponents are affected; the sum rule, being a statement about those exponents, still relies on the drift-dominated window.

Beyond that window the variance term dominates,
$d\bS/d\tau \sim \lr\bs^{2\expp-1}\bS^{2\expq-2}$, giving
$\bS \sim (\lr\bs^{2\expp-1}\tau)^{1/(3-2\expq)}$. For $\expq = 1/2$ this is
$\bS \sim \tau^{1/2}$, hence $\ar = 1/2$, and since $\Lt \sim \xc\bS^{\expq-1}$,
$\at = 1/4$. The pair $(1/2, 1/4)$ has $2\ar + \at = 5/4 \neq 1$: contamination
of the radial channel by stochastic norm growth would displace points off the
line, and the measured points are on it.

Equation~\eqref{eq:Rrad} is also the diagnostic behind the statement in
Section~\ref{sec:discussion} that adaptive optimizers trained far beyond this
window can enter a qualitatively different regime.

\subsection{Regime conditions}

\paragraph{Noise-dominated second moment.}
Appendix~\ref{app:precond} used $V_{i} \simeq \Sig_{ii}/\bs$, dropping the
squared mean gradient in $V_{i} = \bar{g}_{i}^{2} + \Sig_{ii}/\bs$. Boundary
localization makes $\bar{g}_{i}$ decay as $\bS^{-2}$ while
$\Sig_{ii}/\bs \sim 1/(\bs\bS)$, so the neglected ratio is
$\bar{g}_{i}^{2}/(\Sig_{ii}/\bs) \sim \bs\bS^{-3}$ and the approximation
requires $\bS^{3} \gg \bs$. With $\bs \leq 2048$ and $\bS$ reaching $10^{2}$ or
more in our runs, this is satisfied by a wide margin over the fitted range.

\paragraph{Finite-step stability.}
The updates analyzed here are discrete, and the preconditioned drift operator
must have eigenvalues inside the discrete stability region. In the
optimizer-metric coordinates of Appendix~\ref{app:tangential} a sufficient
condition is
\begin{equation}
0 < \lr\,\lambda_{\max}\!\left(A^{1/2}\Hess A^{1/2}\right) < 2 .
\end{equation}
This bounds the usable learning rate from above. Runs that violate it leave the
scaling regime altogether rather than shifting exponents, so they sit far from
the optimum and are never selected when the optimal learning rate is located; no
explicit exclusion is therefore applied (Appendix~\ref{app:experiments}).

\subsection{Features of Adam absent from the analysis}

Three features of Adam as implemented do not appear in the update
$\Delta\WS = -\lr A\hat{g}$ analyzed above.

\paragraph{Finite $\epsilon$.}
The preconditioner is $(\sqrt{V} + \epsilon)^{-2\pwr}$ rather than $V^{-\pwr}$,
and saturates once $\sqrt{V_{i}} \ll \epsilon$. Since
$V_{i} \simeq s_{i}/(\bs\bS)$, this occurs beyond
$\bS_{\epsilon} \sim s_{i}/(\bs\epsilon^{2})$, past which $A$ is an
$\epsilon$-dependent constant and the dynamics becomes SGD-like up to a
rescaling of the learning rate, returning the radial exponent to $1/3$. The
adaptive window is therefore bounded above by $\epsilon$ as well as by
Eq.~\eqref{eq:Rrad}. We state this as a prediction and do not test it here.

\paragraph{Second-moment memory.}
With $v_{t} = \beta_{2}v_{t-1} + (1-\beta_{2})\hat{g}_{t}^{\odot 2}$ the
preconditioner tracks $V$ with a lag $t_{v} \sim 1/(1-\beta_{2})$, and the
instantaneous treatment is valid when
$\left(1-\beta_{2}\right)^{-1}\lvert d\log V_{i}/dt\rvert \ll 1$. Along a
power-law trajectory $d\log V_{i}/dt = O(t^{-1})$, so the condition improves at
late times for every fixed $\beta_{2} < 1$. This is the concrete form of the
adiabaticity claim in Appendix~\ref{app:tangential}.

\paragraph{Momentum.}
Appendix~\ref{app:tangential} assumes a first-order Markov recursion in $e$.
With a first moment the state is augmented to $s_{t} = (e_{t}, m_{t-1})$,
evolving as $s_{t+1} = F s_{t} + G\xi_{t}$, and the stationary covariance
satisfies $C_{s} = F C_{s}F^{\top} + \tfrac{1}{\bs}G\Sig G^{\top}$. This is
preferable to replacing $\bs$ by an effective batch size, because momentum
introduces temporal correlations as well as a change in instantaneous variance.
Since the stationary mean of $m_{t}$ is the slowly varying mean gradient,
momentum does not change the leading $\bS$-dependence of the radial drift in the
adiabatic regime, although it can change tangential prefactors and the stability
boundary above. We do not re-derive Proposition~\ref{prop:trace} in the
augmented state. The empirical evidence that the scaling form survives is in
Figure~\ref{fig:fig4}a: Adam, the three PowerAdam variants and SignGD all carry
a first moment, and all lie on the line.

\subsection{Conditions established elsewhere}

For convenience, the remaining conditions and where they are stated: a regular
decision boundary (Appendix~\ref{app:radial}, relaxed in
Appendix~\ref{app:margin}); a stable preconditioned radial shape
(Appendix~\ref{app:precond}); the local quadratic regime, stationarity of the
tangent distribution, small steps, $A$ approximately constant over the local
equilibration time, and adiabaticity (Appendix~\ref{app:tangential}); and hard
labels, without which the student has a finite optimal scale and the power law
holds over an intermediate range rather than indefinitely
(Section~\ref{sec:discussion}).

\section{Experimental details}
\label{app:experiments}

\subsection{Model and sweeps}

All runs use the teacher--student model of Section~\ref{sec:model} and
Appendix~\ref{app:radial}: $\ncls = 128$ classes, $\ndim = 32$ input dimensions,
inputs drawn i.i.d.\ from $\mathcal{N}(0, I_{\ndim})$ and RMS-normalized, and one-hot labels from a
fixed unit-norm teacher. Training is online, with a fresh batch drawn at every
step and no sample reused. Every run is $10{,}000$ steps, logged every $10$
steps.

The student is initialized at $\WS = 0$, which lies in the row-centered subspace of Appendix~\ref{app:radial}, and the teacher's entries are drawn i.i.d.\ from PyTorch's default linear-layer initialization, $\mathcal{U}(-m^{-1/2}, m^{-1/2})$. Adam and PowerAdam use $(\beta_1, \beta_2, \epsilon) = (0.9, 0.999, 10^{-8})$, Muon uses Nesterov momentum $0.95$ with five Newton--Schulz iterations, and SignGD uses momentum $0.9$; no optimizer uses weight decay. At each logged step the test loss is evaluated on a fresh batch of $4096$ samples. The rescaled teacher $W_R = \bS\widehat{W}_T$ is realized on logits, by rescaling the teacher's logits to the student's mean logit standard deviation across classes: for a row-centered matrix with $\mathbb{E}[xx^\top] = I_m$ this standard deviation is $\|W\|_F/\sqrt{n}$, as we confirm for the student (measured $0.0886\,\bS$ against $\bS/\sqrt{128} = 0.0884\,\bS$), and it is unaffected by a common logit shift. $L$ and $\Lr$ are evaluated on the same test batch, so $\Lt = L - \Lr$ does not accumulate independent sampling noise from its two terms.

For each optimizer we sweep $16$ logarithmically spaced learning rates against
$8$ batch sizes, $\bs \in \{16, 32, 64, \ldots, 2048\}$, for $128$ runs per
optimizer. The learning-rate ranges differ by optimizer, because the optimum
sits at very different scales (Table~\ref{tab:sweeps}):

\begin{table}[h]
\centering
\begin{tabular}{ll}
\toprule
optimizer & learning-rate range \\
\midrule
SGD                     & $[10^{-1},\ 10^{3}]$ \\
PowerAdam($0.125$)      & $[10^{-1},\ 10^{3}]$ \\
PowerAdam($0.25$)       & $[10^{-2},\ 10^{2}]$ \\
PowerAdam($0.375$)      & $[10^{-2},\ 10^{1}]$ \\
Adam $=$ PowerAdam($0.5$) & $[10^{-3},\ 10^{1}]$ \\
Muon                    & $[10^{-2.5},\ 10^{1}]$ \\
SignGD                  & $[10^{-3},\ 10^{0}]$ \\
\bottomrule
\end{tabular}
\caption{Learning-rate sweep ranges, each with $16$ logarithmically spaced
values. The ranges shift downward as the preconditioner power increases,
spanning six decades across the family; this is the unquantified form of the
optimizer-dependence measured in Figure~\ref{fig:fig3}.}
\label{tab:sweeps}
\end{table}

\subsection{Locating the optimum}

For each batch size and each logged step we first take the learning rate that
minimizes the test loss on the grid. Because the grid is coarse, we then refine
it: writing the loss near its minimum as a quadratic in $\log_{10}\lr$, we fit a
parabola through the grid minimum and its two neighbors and take its vertex,
\begin{equation}
\lropt = 10^{-b/(2a)}, 
\end{equation}
for a fit $\Ltot \simeq a(\log_{10}\lr)^{2} + b\log_{10}\lr + c$.

The same quadratic structure that justifies this interpolation also explains an
asymmetry visible in Figure~\ref{fig:fig3}: near the optimum the location of the
minimum is determined only at second order in $\log_{10}\lr$, while its value is
determined at first order. The $\lropt$ traces in panel (a) are therefore noisy
at large $\data$ while the envelope in panel (c) is clean.

Runs that violate the stability bound of Appendix~\ref{app:validity} have large
loss, sit far from the optimum, and are never selected by this procedure, so no
learning rates are excluded by hand.

\begin{figure}[h]
\centering
\includegraphics[width=\linewidth]{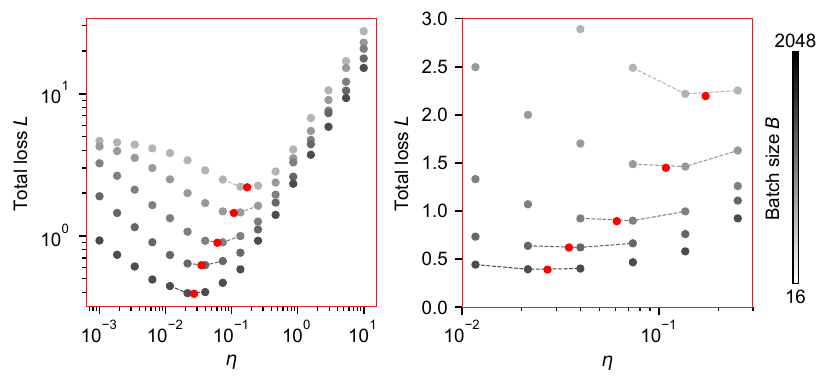}
\caption{\textbf{The loss basin is broad and quadratic in $\log\lr$.} Total
loss against learning rate for Adam, one point per swept learning rate, with
grey level denoting batch size at a fixed training step. Red markers are the
parabola vertices used as $\lropt$; they fall between grid points because they
are interpolated. \textbf{Left:} the full swept range. \textbf{Right:} the same
data near the minimum, on a linear vertical axis. Two things are visible. The
minima are interior to the swept range at every batch size, so the estimator
never has to extrapolate to a grid edge. And the basin is flat: the loss changes
little over a factor of several in $\lr$. Since the loss is quadratic in
$\log\lr$ near its minimum, the location of the optimum is fixed only at second
order while its value is fixed at first, which is why the $\lropt$ traces in
Figure~\ref{fig:fig3}a are noisy at large $\data$ while the envelope in
Figure~\ref{fig:fig3}c is clean.}
\label{fig:fs3a}
\end{figure}

\subsection{The plane fit}

The optimal learning rate is fitted jointly across batch sizes and budgets as a
log-log plane,
\begin{equation}
\log_{10}\lropt = s_{D}\log_{10}\data + s_{B}\log_{10}\bs + c,
\qquad \data = \bs t ,
\end{equation}
by ordinary least squares over all pairs $(\bs, t)$ with $t \geq 10^3$ steps, matching the window used for the channel exponents. The uncertainties quoted for
$s_{D}$ and $s_{B}$ are the spread across four lower budget cutoffs, $D \ge \{10^5, 3\times 10^5, 10^6, 3\times 10^6\}$, rather than a regression standard error, since the fitted points are correlated across training steps.

The preconditioner exponents follow from Appendix~\ref{app:envelope}:
$\expq = -3\,s_{D}$ and $\expp = 1 - s_{B}$. For a general margin density the
first becomes $\expq = -(\nu+3)\,s_{D}$ (Appendix~\ref{app:margin}).

Fitting the plane rather than each batch size separately matters. Slopes taken
from constant-step slices differ from those taken from constant-budget slices,
and the joint fit removes that ambiguity by using every $(\bs, t)$ pair at once.

\subsection{Exponent fits and uncertainties}

Time-scaling exponents are obtained by ordinary least squares on $\log$ loss
against $\log$ training step over a fixed late-time window, applied identically
to every curve and every optimizer; no window is chosen per optimizer. The value
quoted for an exponent is the mean over the four learning-rate curves shown in
the corresponding figure, and the quoted uncertainty is their standard
deviation; the crosses in Figures~\ref{fig:fig4}a and~\ref{fig:fs4a} are the
same quantity. Reporting the spread across curves rather than a single
regression standard error is the more conservative choice, since it measures
whether the curves agree with each other and not merely how well each is fitted.
For SGD this gives a loss
exponent $0.34 \pm 0.01$ and a norm exponent $0.33 \pm 0.01$; for Adam the
norm exponent is $0.44 \pm 0.04$, the larger spread reflecting that Adam's
curves are less nearly parallel.

\paragraph{The envelope fit.}
The envelope $\Ltot^{\ast}(\data)$ is the minimum of the loss over hyperparameters
at each budget. 
At each $(\bs, t)$ the loss entering the fit is the minimum over the learning-rate grid; the parabolic refinement of Appendix~H.2 is used to locate $\lropt$ but not to evaluate $\Ltot^\ast$, which suffices because near its minimum the loss depends on $\log \lr$ only at second order.
Each batch size covers only $\data \in [10^{3}\bs, 10^{4}\bs]$, so
the number of curves available to that minimum falls from four to one across the full range of budgets, and a minimum taken over a set that shrinks with budget is biased by
an amount that varies with $\data$, which tilts the fitted slope. Because the loss
collapses onto $z \equiv \lr\bs^{\expp-1}$, the batch sizes after optimizing $\lr$
estimate one quantity rather than genuinely different conditions, so we estimate
the exponent by pooling them, with a single slope in $\log_{10}\data$ and one free
level per batch size,
\begin{equation}
\log_{10} \Ltot^{\ast}(\bs, \data) = -\alpha_D \log_{10}\data + c_{\bs},
\end{equation}
fitted by ordinary least squares over every $(\bs, \data)$ point in the window,
with each batch size weighted equally since points within a curve are correlated
across training steps. This measures the same exponent without the selection bias
of the minimum. The fitted levels $c_{\bs}$ agree to within $1\%$ across the batch
sizes that span the window for every optimizer except SignGD, an independent check
of the collapse onto $z$.

\begin{table}[t]
\centering
\begin{tabular}{lccccc}
\toprule
& \multicolumn{4}{c}{$D_{\min}$} & \\
\cmidrule(lr){2-5}
Optimizer & $10^{5}$ & $3\times10^{5}$ & $10^{6}$ & $3\times10^{6}$ & $\alpha_D$ \\
\midrule
SGD               & 0.343 & 0.342 & 0.340 & 0.341 & $0.341 \pm 0.001$ \\
PowerAdam(0.125)  & 0.334 & 0.332 & 0.332 & 0.333 & $0.333 \pm 0.001$ \\
PowerAdam(0.25)   & 0.335 & 0.334 & 0.331 & 0.333 & $0.333 \pm 0.002$ \\
PowerAdam(0.375)  & 0.337 & 0.336 & 0.336 & 0.339 & $0.337 \pm 0.001$ \\
Adam              & 0.339 & 0.339 & 0.342 & 0.347 & $0.341 \pm 0.003$ \\
Muon              & 0.336 & 0.332 & 0.330 & 0.328 & $0.332 \pm 0.003$ \\
SignGD            & 0.296 & 0.314 & 0.325 & 0.327 & $0.315 \pm 0.012$ \\
\bottomrule
\end{tabular}
\caption{\textbf{Envelope exponents against the lower fitting cutoff.} $\alpha_D$ fitted over $D \geq D_{\min}$ for four choices of $D_{\min}$, with the quoted value the mean across the four and the uncertainty their spread. The envelope is built from checkpoints with $t \geq 10^3$, the same window used for the dynamic exponents $\alpha_r$ and $\alpha_t$, so that all four exponents are fitted over matched data. The envelope is defined as the minimum of the loss over hyperparameters at each budget, as in Figure~\ref{fig:fig3}c, with its exponent estimated by a pooled fit with a single slope in $\log D$ and one free level per batch size (Appendix~\ref{app:experiments}.4). Six of the seven lie within 2.5\% of $1/3$, with a scatter across optimizers of $0.004$; SignGD approaches $1/3$ from below over this window, its per-batch-size envelope slopes, fitted separately, rising from $0.06$ at $B=16$ to $0.33$ at $B=2048$, so the batch sizes that reach furthest into the asymptotic regime already give the family value. We stop at $D_{\min} = 3\times10^6$ because the number of batch sizes reaching a given budget falls as $D$ grows: since each covers $D \in [10^3B, 10^4B]$, fewer than three contribute above $5\times10^6$ and only one above $10^7$, so later cutoffs measure the largest batch sizes rather than a minimum over hyperparameters.}
\label{tab:envelope-cutoff}
\end{table}

\paragraph{Consistency of the optimal exponent.}
The optimal data-scaling exponent can be obtained two ways. Fitting the envelope of Figure~\ref{fig:fig3}c gives the values of Table~\ref{tab:envelope-cutoff}, which lie within 2.5\% of $1/3$ for six of the seven optimizers, with a scatter across optimizers of $0.004$; SignGD is the exception and is discussed above. Substituting the measured channel exponents into $\alpha_D = (2\alpha_r + \alpha_t)/3$ instead gives $0.34 \pm 0.01$ for SGD and $0.35 \pm 0.02$ for Adam; this route returns the measured sum rule divided by three, so its excess and the sum rule's $2\alpha_r + \alpha_t = 1.03$ for SGD and $1.04$ for Adam are the same numbers. Both are consistent with the fitted envelopes, and both sit above $1/3$.

\paragraph{Where the excess lies.}
Writing $\ar = \alpha_\bS s_r$ and $\at = \alpha_\bS s_t$, where $s_r$ and $s_t$ are the local slopes of $\Lr$ and $\Lt$ against $\bS$, the sum rule takes the time-free form $2s_r + s_t = 1/\alpha_\bS$, in which every quantity is a slope against the weight norm. For SGD the left side measures $3.16$ against $3.07$ on the right, accounting for the whole of the measured excess. The tangential slope sits at its asymptotic value ($s_t = 1.01$ against $1$) while the radial slope exceeds its own ($s_r = 1.08$ against $1$), so the excess lies in the radial channel, and it is the $O(1/\bS)$ correction of Remark~\ref{rem:adam-edge}: $s_r$ falls monotonically with $\bS$ over the fitting window and takes nearly the same values under Adam ($1.09$). Because the two exponents are fitted to the same runs, their spreads move together, and the ratio $s_r$ is better determined than the individual uncertainties on $\alpha_\bS$ and $\ar$ suggest.

The same correction accounts for the envelope. The channel combination gives $(2\ar + \at)/3 = 0.34\pm 0.01$ for SGD and $0.35 \pm 0.02$ for Adam, and the fitted envelopes give $0.341$ for both, so the two routes agree to within their uncertainties and both exceed $1/3$ by a few percent. Estimating the envelope exponent by fitting the minimum curve directly would bias the slope downward, since that minimum is taken over a batch-size set that shrinks as $D$ grows, so the bias varies with the fitting window; the pooled estimator avoids this, and the per-optimizer spread across cutoffs is $0.003$ or less for six of the seven, and $0.012$ for SignGD. Since $s_r$ decreases with $\bS$, the framework predicts that the remaining excess shrinks at later budgets, and we read $1/3$ as the asymptotic value with $0.333 \pm 0.008$ across all seven, or $0.336 \pm 0.004$ excluding SignGD.

\subsection{Supporting measurements}

\begin{figure}[h]
\centering
\includegraphics[width=\linewidth]{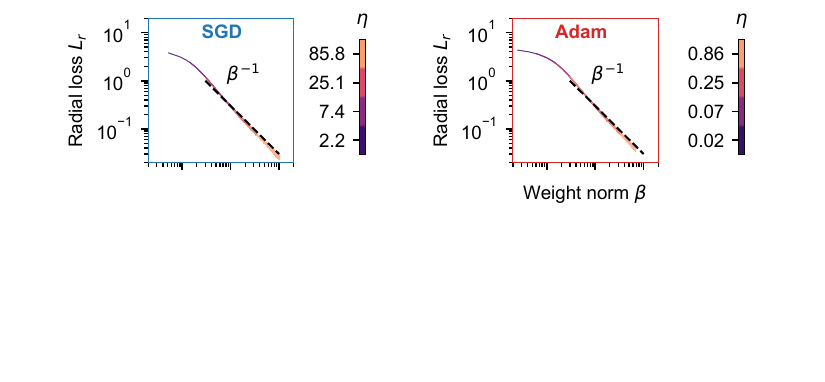}
\caption{\textbf{The radial loss is a function of the weight norm alone.}
$\Lr$ against $\bS$ for SGD (left) and Adam (right), colored by learning rate
over the ranges of Figure~\ref{fig:fig1}. Curves at different learning rates
collapse onto a single master curve approaching $\Lr \propto \bS^{-1}$
(dashed). This tests Eq.~\eqref{eq:Lrad} directly, without comparing two
separately fitted exponents.}
\label{fig:fs2a}
\end{figure}

\begin{figure}[h]
\centering
\includegraphics[width=\linewidth]{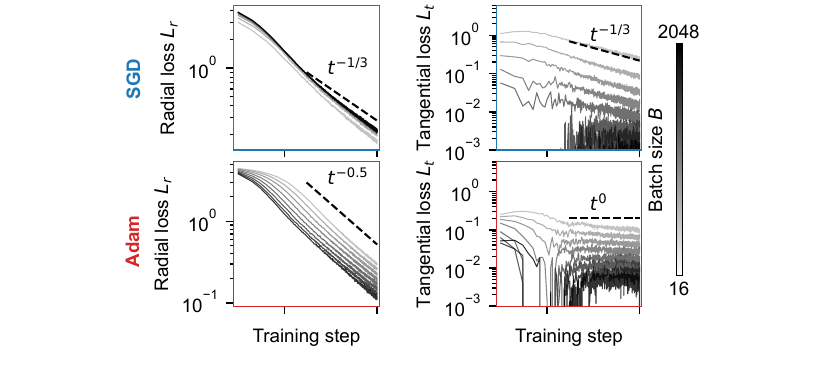}
\caption{\textbf{The channel exponents do not depend on batch size.} $\Lr$
(left) and $\Lt$ (right) against training step for all eight batch sizes at
fixed learning rate, SGD above and Adam below, with the exponents of
Figure~\ref{fig:fig2} shown as dashed guides. Batch size shifts the curves
without changing their slopes, which together with Figure~\ref{fig:fig2}
establishes that $\ar$ and $\at$ depend on neither hyperparameter. Larger
batches lower $\Lt$, while $\Lr$ is nearly batch-independent for SGD: at fixed
step $\Lr$ depends on batch size only through $\bs^{\expp}$, and $\expp = 0$ for
SGD. At a fixed sample budget $\data = \bs t$ the comparison changes sign, since
a larger batch is less noisy and lowers $\Lt$ but buys fewer optimization steps
and so leaves $\Lr$ higher. The erratic early-time behavior at small batch size
is $\Lt < 0$, the finite-norm transient described in
Appendix~\ref{app:radial}.}
\label{fig:fs2b}
\end{figure}

\begin{figure}[h]
\centering
\includegraphics[width=\linewidth]{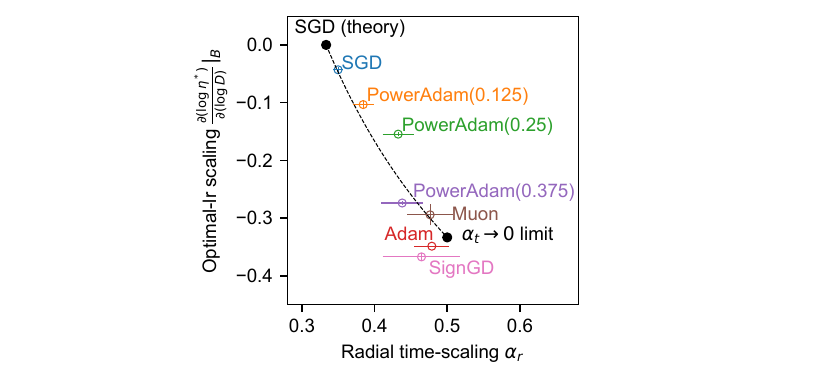}
\caption{\textbf{A dynamic exponent and a hyperparameter exponent are not
independent.} Measured $s_{D}$ against $\ar$ for all seven optimizers, with the
prediction of Eq.~\eqref{eq:sD-ar} as a dashed curve and the SGD value
$(1/3, 0)$ as a black point. The two axes come from independent measurements,
$\ar$ from the loss curves and $s_{D}$ from the plane fit, so this test does not
involve $\at$. The curve terminates at $(1/2, -1/3)$, the limit
$\at \to 0$ or $\expq \to 1$, beyond which the tangential channel no longer
decays.}
\label{fig:fs4a}
\end{figure}

\begin{figure}[h]
\centering
\includegraphics[width=\linewidth]{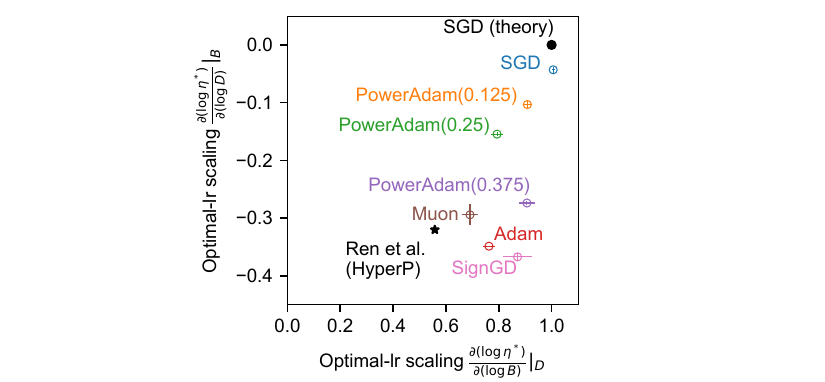}
\caption{\textbf{The two optimal-learning-rate exponents across optimizers.}
Measured $s_{D}$ against $s_{B}$, with the SGD prediction $(1, 0)$ as a black
point and the values reported by \citet{ren2026hypersphere} for their
hypersphere optimizer as a star. Since $s_{D} = -\expq/3$ and
$s_{B} = 1 - \expp$, the horizontal axis measures the preconditioner's batch
exponent and the vertical axis its scale exponent.}
\label{fig:fs4b}
\end{figure}

\begin{figure}[h]
\centering
\includegraphics[width=\linewidth]{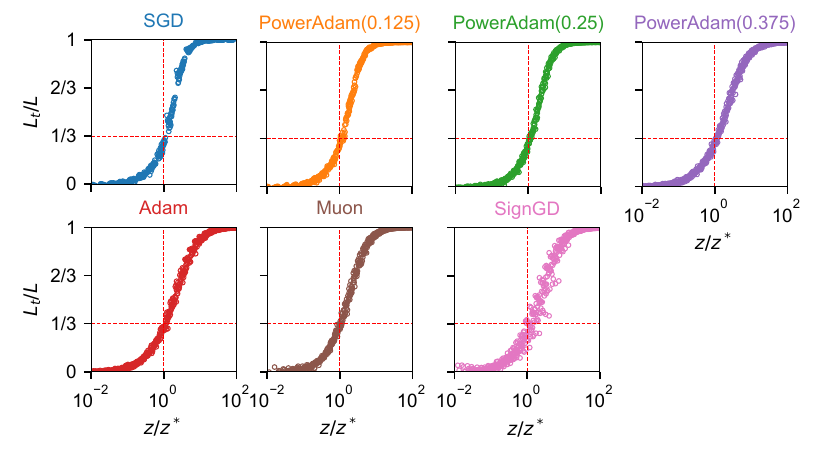}
\caption{\textbf{The two channels reach a fixed ratio at the optimum.}
$\Lt/\Ltot$ against $\xc/\xcopt$ for each of the seven optimizers, with
$\xc = \lr\bs^{\expp-1}$ and $\xcopt(\data) = 10^{c}\data^{s_{D}}$ taken from
the plane fit of Appendix~\ref{app:experiments}. Each panel contains all $16$
learning rates and all $8$ batch sizes at five training times
($t \approx 300$, $1000$, $3000$, $5000$, $9000$), and they collapse onto a
single curve: $\Lt/\Ltot$ depends on the hyperparameters only through
$\xc/\xcopt$. Every optimizer passes through $1/3$ at $\xc = \xcopt$ (dashed
lines), as Theorem~\ref{thm:envelope} predicts with no fitted quantity beyond the plane fit that sets $\xcopt$. The
collapse follows $R/(1+R)$ with $R = \tfrac{1}{2}(\xc/\xcopt)^{3\ar}$;
deviations appear only for $\Lt/\Ltot \lesssim 0.1$, where $\Lt$ is a small
difference of two larger quantities and the finite-norm transient of
Appendix~\ref{app:radial} can drive it negative.}
\label{fig:fs4c}
\end{figure}

\end{document}